\relax
\documentclass[letterpaper]{article} 
\usepackage{aaai21}  
\usepackage{times}  
\usepackage{helvet} 
\usepackage{courier}  
\usepackage[hyphens]{url}  
\usepackage{graphicx} 
\urlstyle{rm} 
\usepackage{natbib}  
\usepackage{caption} 
\frenchspacing  
\setlength{\pdfpagewidth}{8.5in}  
\setlength{\pdfpageheight}{11in}  
\pdfinfo{
/Title (Mind2Mind : Transfer learning for GANs)
/Author (Yael Fregier, Jean-Baptiste Gouray)
/TemplateVersion (2021.1)
} 


\usepackage{caption}
\usepackage{environ}
\usepackage[colorinlistoftodos]{todonotes}
\usepackage{algorithm}
\usepackage[switch]{lineno}
\usepackage{tikz-cd}

 \usepackage{amsthm}
 \usepackage{amsmath,amsfonts,amssymb}  

\usepackage{xcolor}
\usepackage{graphicx}
\usepackage{subfigure}
\usepackage{algpseudocode}

\newtheorem*{lem*}{Lemma}
\newtheorem*{def*}{Definition}

\newtheorem{rem}{Remark}

\newtheorem*{theo*}{Theorem}
\newtheorem{theo}{Theorem}
\newtheorem{lem}{Lemma}
\newtheorem{ass}{Assumption}

\setcounter{secnumdepth}{2}

\title{Mind2Mind : Transfer Learning for GANs}

%

\author{
    Yael Fregier, 
    Jean-Baptiste Gouray \thanks{both authors contributed equally}
    \\
}
\affiliations{

     Univ. Artois, UR2462\\ Laboratoire de Math\'ematiques de Lens (LML)\\ F-62300 Lens, France\\
yael.fregier@univ-artois.fr, jeanbaptiste.gouray@gmail.com 
}

\begin{document}

\maketitle

\begin{abstract}
Training generative adversarial networks (GANs) on high quality (HQ) images involves important computing resources. This requirement represents a bottleneck for the development of applications of GANs. We propose a transfer learning technique for GANs that significantly reduces training time. Our approach consists of freezing the low-level layers of both the critic and generator of the original GAN. We assume an auto-encoder constraint in order to ensure the compatibility of the internal representations of the critic and the generator. This assumption explains the gain in training time as it enables us to bypass the low-level layers during the forward and backward passes. We compare our method to baselines and observe a significant acceleration of the training. It can reach two orders of magnitude on HQ datasets when compared with StyleGAN. We prove rigorously, within the framework of optimal transport, a theorem ensuring the convergence of the learning of the transferred GAN. We moreover provide a precise bound for the convergence of the training in terms of the distance between the source and target dataset.  \end{abstract}

\section{Introduction}

The recent rise of deep learning as a leading paradigm in AI mostly relies on computing power (with generalized use of GPUs) and massive datasets. These requirements represent bottlenecks for most practitioners outside of big labs in industry or academia and are the main obstacles to the generalization of the use of deep learning.  Therefore, methods that can bypass such bottlenecks are in strong demand.  Transfer learning is one of them and various methods of transfer learning (for classification tasks) specific to deep neural networks have been developed \cite{Tan2018ASO}.

 A {\it generative problem} is a situation in which one wants to be able to produce elements that could belong to a given data set $\mathcal{D}$.
Generative Adversarial Networks (GANs) were introduced in 2014 \citep{Goodfellow:2014aa} \cite{Salimans:2016aa}  to tackle generative tasks with deep learning architectures and improved in \cite{Arjovsky:2017aa, Gulrajani:aa} (Wasserstein GANs ). They immediately took a leading position in the world of generative models, comforted by progress with HQ images \cite{ProGAN}, \cite{StyleGAN}.
The goal of our work (see section \ref{section::algo}) is to develop in a generative setting, i.e., for GAN architectures, the analog of the {\it cut-and-paste} approach. 

The main idea of our method is to reuse, for the training of a GAN, the weights of an autoencoder already trained on a source dataset $\mathcal{D}$.  The weights of the low level layers of the generator (resp. critic) will be given by those of the decoder (resp. encoder). We call {\it MindGAN} the high level layers of the generator. It is a subnetwork that is trained as a GAN on the encoded features of the target dataset $\mathcal{D'}$.

We prove in section \ref{j'te l'garantis} a theorem that controls the convergence of the transferred GAN in terms of the quality of the autoencoder, the domain shift between $\mathcal{D}$ and $\mathcal{D'}$ and the quality of the MindGAN. As a consequence, our experimental results in section \ref{results} rely heavily on the choice of the autoencoder. ALAE autoencoders \cite{ALAE} are extremely good autoencoders that turned out to be crucial in our experiments with HQ images. Their use, in conjunction with our transfer technique, enables an {\bf acceleration} of the training {\bf by a factor of 656}, while keeping a good quality. 




\section{Preliminaries}\label{WGANS}

A GAN consists of two networks trained adversarially. The {\it generator} $g:Z\rightarrow \chi$ associates to a vector $z$ sampled from a latent vector space $Z$ a vector $g(z)$ in another vector space $\chi$ while the {\it discriminator} $c:\chi \rightarrow \mathbb{R}$ learns to associate a value close to 1 if the vector $g(z)$ belongs to $\mathcal{D}$ and zero otherwise. Their respective loss functions, $L_g$ and $L_c$ are recalled in section \ref{section::algo}.

One can assume that elements of $\mathcal{D}$ can be sampled from an underlying probability distribution $\mathbb{P}_\mathcal{D}$ on a space $\chi$ and try to approximate it by $\mathbb{P}_\theta$, another distribution on $\chi$ that depends on some learnable parameters $\theta$.  {\it Generating} then means sampling from the distribution $\mathbb{P}_\theta$. The main idea behind a Wasserstein GAN is to use the {\it Wasserstein distance} (see appendix \ref{proof!}, and \cite{Villani} definition 6.1) to define by $W(\mathbb{P}_\mathcal{D},\mathbb{P}_\theta)$ the loss function for this optimisation problem.  More precisely, the Wasserstein distance is a distance on Borel probability  measures on $\chi$ (when compact metric space). In particular, the quantity $W(\mathbb{P}_\mathcal{D},\mathbb{P}_\theta)$ gives a number which depends on the parameter $\theta$ since $\mathbb{P}_\theta$ depends itself on  $\theta$. The main result of \cite{Arjovsky:2017aa} asserts that
if $\mathbb{P}_\theta$ is of class $\mathcal{C}^k$ as a function of $\theta$ almost everywhere, $W(\mathbb{P}_D, \mathbb{P}_\theta)$ is also of class $\mathcal{C}^k$ with respect to $\theta$. As a consequence, one can solve this optimization problem by doing gradient descent for the parameters $\theta$ (using a concrete gradient formula provided by the same theorem) until the two probability distributions coincide.

Among the distributions on $\chi$, some can be obtained from a prior distribution $\mathbb{P}_{Z}$ on an auxiliary latent space $Z$ and a map $g : Z \rightarrow \chi$ as follows. The {\it push-forward} of $\mathbb{P}_{Z}$ under $g$ \cite{Bo} is defined so that a sample is given by $g(z)$ the image through $g$ of a sample $z$ from the distribution $\mathbb{P}_{Z}$. We will denote this pushforward by ${g}_\sharp \mathbb{P}_{Z}$ and when $g$ depends on parameters $\theta$ use instead the notation $\mathbb{P}_\theta:={g}_\sharp \mathbb{P}_{Z}$. In practice, one can choose for $\mathbb{P}_{Z}$ a uniform or Gaussian distribution, and for $g$ a (de)convolution deep neural network. In our applications, we will consider for instance $Z:=\mathbb{R}^{128}$ equipped with a multivariate gaussian distribution and $\chi=[-1,1]^{28\times 28}$, the space of gray level images of resolution $28\times 28$. Hence, sampling from $\mathbb{P}_\theta$ will produce images.

In order to minimise the function $W(\mathbb{P}_\mathcal{D},\mathbb{P}_\theta)$, one needs a good estimate of the Wasserstein distance. The {\it Rubinstein-Kantorovich duality} (See \cite{Villani} theorem 5.9) states that $W(\mathbb{P}, \mathbb{P}')=\max_{\vert c\vert_L\leq 1} \mathbb{E}_{x \sim \mathbb{P}} \ c(x)-\mathbb{E}_{x\sim \mathbb{P}'} \ c(x)$, where $\mathbb{E}_{x \sim \mathbb{P}}f(x)$ denotes the expected value of the function $f$ for the probability measure $\mathbb{P}$, while the max is taken on the unit ball for the Lipschitz semi-norm. Concretely, this max is obtained by gradient ascent on a function $c_\theta$ encoded by a deep convolution neural network.



In our case, when $\mathbb{P}':=\mathbb{P}_\theta$, the term $\mathbb{E}_{x\sim \mathbb{P}'} \ c(x)$ takes the form $\mathbb{E}_{z\sim \mathbb{P}_Z} \ c_\theta(g_\theta(z))$. One recovers the diagram \begin{equation}\label{pair}Z\overset{g_\theta}{\longrightarrow} \chi \overset{c_\theta}\longrightarrow \mathbb{R}\end{equation} familiar in the adversarial interpretation of GANs. With this observation, one understands that one of the drawbacks of GANs is that there are two networks to train. They involve many parameters during the training, and the error needs to backpropagate through all the layers of the two networks combined, i.e., through $c_\theta\circ g_\theta$. This process is computationally expensive and can trigger the vanishing of the gradient. Therefore, specific techniques need to be introduced to deal with very deep GANs, such as in \cite{growing}. The approach we present in section \ref{mind2mind} can circumvent these two problems.


\label{Jaimelatrans}

 {\it Transfer learning} is a general approach in machine learning to overcome the constraints of the volume of data and computing power needed for training models. It leverages a priori knowledge from a learned task $\mathcal{T}$ on a source data set $\mathcal{D}$ in order to learn more efficiently a task $\mathcal{T}'$ on a target data set $\mathcal{D}'$. It applies in deep learning in at least two ways: {\it Cut and Paste} and {\it Fine tuning}.

{\bf Cut and Paste} takes advantage of the difference between high and low-level layers of the network $c$. It assumes that the network $c$ is composed of two networks $c_0$ and $c_1$ stacked one on each other, i.e., mathematically that $c=c_0\circ c_1$ (one understands the networks as maps). While the low-level layers $c_1$ process low-level features of the data, usually common for similar datasets, the high-level layers $c_0$ are in charge of the high-level features which are very specific to each dataset. Hence, instead of retraining all the weights of an auxiliary network pre-trained on $\mathcal{D}$, one can retrain only the parameters of the last layers of the network while keeping the other parameters untouched. This approach boils down to the following steps \label{steps} :

\begin{enumerate}
\item  identify a dataset $\mathcal{D}$ similar to the data set $\mathcal{D}'$ we are interested in, both in the same space $\chi$,
\item  import a network $c=c_0\circ c_1$ , already trained on the dataset $\mathcal{D}$, where $c$ factors as 
\begin{tikzcd}[column sep=1.5em]
 &M \arrow{dr}{c_0} &\\
\chi \arrow{ur}{c_1} \arrow{rr}{c} && C \end{tikzcd} with $C$ the space of classes.

\item pass the new dataset $\mathcal{D}'$ through $c_1$ and train $c_0'$ on $c_1(\mathcal{D}')$, and
\item use $c':=c_0'\circ c_1$ as the new classifier on $\mathcal{D}'$.
\end{enumerate}

The main advantages of this approach are the following :
\begin{itemize}
\item[a.] much fewer parameters to train (only the parameters of $c_0'$, which in practice correspond to a few dense layers),
\item[b.] need to pass the data $\mathcal{D}'$ only once through $c_1$, and
\item[c.] no need to backpropagate the error through $c_1$.

\end{itemize}

 Since the low-level features often represent the most time-consuming part of the training, eliminating the need to train their weights will accelerate the process. If the datasets are similar, only training the last layers generally leads to good results in a much shorter time and with fewer data.
 
  {\bf Fine tuning} is based on the same first two steps than {\it Cut and Paste}, but instead of steps 3 and 4, uses the weights of $c$ to initialise the training of the new network $c'$ on $\mathcal{D}'$. It is assumed that $c$ and $c'$ share the same architecture.


In both approaches, the network $c$ must have been previously trained by a third party during a very long time on the source dataset $\mathcal{D}$, which is potentially much more massive than $\mathcal{D}'$. 
It turns out in practice that these approaches enable to train a network on a new task with much less computing power and data (see \cite{donahue:a}).

\section{Related works}
In the following section, we survey the works to which this paper can be associated. We postpone the analysis of the main differences with these works in section \ref{compare}.

{\bf Wasserstein autoencoders}
A version of autoencoders in conjunction with GANs has been considered in \cite{AAE} and later generalized with Wasserstein distance in \cite{Tolstikhin:aa} and subsequent works. In short, in this approach, one trains an adversarially an autoencoder. Moreover, one adds a regularizer term to get a generative model. More details in section \ref{proof!}.

{\bf Adversarial learned inference}\label{adversarial_related}
A second stream of papers, \cite{ali}, \cite{BiGan}, \cite{hali}, \cite{DistillGAN} and \cite{ARAE} to cite a few, uses another blend of autoencoders with GANs. Their key idea is to learn adversarially an encoder $g_x : \chi \rightarrow M$ together with a decoder $g_m : M \rightarrow \chi$ against a discriminator $c : \chi\times M \rightarrow \mathbb{R}$ in a way that the distributions given by the couples $(g_m(m),m)$ and $(x,g_x(x))$ are indistinguishable from the discriminator point of view. Note that one does not explicitly train $g_x$ and $g_m$ to be inverses to each other. However theorem 2 of ( \cite{BiGan} ) shows that at optimality they are indeed.

 {\bf Adversarial Latent Autoencoders} These autoencoders \cite{ALAE} have the property of learning the latent distribution to match the encoded distribution. This is very different from other traditional approaches that assume an a priori target latent distribution and learn the encoder to match the encoded distribution with this a priori target. From this perspective, this method is very close to the architecture we use in our work, though the objectives are very different : their point is to disentangle representations in order to be able to control the features, whereas our objective is to do transfer. The similarity in architecture probably explains why ALAE autoencoders are very suited to our method.

{\bf Fine-Tuning}
On the side of papers addressing transfer for GANs, we are aware of \cite{Wang:2018aa}, \cite{medical}. Both apply to GANs {\it fine-tuning}, one of the techniques of transfer learning. It consists in initializing the training of a network on a target dataset $\mathcal{D}'$ with weights from another network with the same architecture, but already trained on a similar source dataset $\mathcal{D}$. The two papers seem to have been written independently. While \cite{medical} is mainly targeting a specific application of de-noising in medical imagery, \cite{Wang:2018aa} is rather interested in understanding fine-tuning for GANs per se. Both report a faster convergence and a better quality, though \cite{Wang:2018aa} also observes that fine-tuning enables training with smaller datasets and that the distance between the source and target datasets influences the quality of the training.

\section{Mind to mind algorithm}

We now adapt  to GANs the cut-and-paste procedure described in \ref{Jaimelatrans}. The difference is that in addition to the classifier $c$ (or {\it critic} in the language of WGAN), one also has a generator $g$. Let us consider a factorisation of the form  $g=g_1\circ g_0$ \begin{tikzcd}[column sep=1.5em]
 &M' \arrow{dr}{g_1} &\\
Z\arrow{ur}{g_0} \arrow{rr}{g} && \chi.
\end{tikzcd}

\label{section::algo}
 \begin{algorithm}[H]
    \small
     \caption{Mind2Mind transfer learning.}\label{algo::transfer}
    \begin{algorithmic}
    \Require  $(c_1,g_1)$, an autoencoder trained on a source dataset $\mathcal{D}$, $\alpha$, the learning rate, $b$, the
      batch size, $n$, the number of iterations of the critic
      per generator iteration, $\mathcal{D}'$, a target dataset, $\varphi'$ and $\theta'$ the initial parameters of the critic $c'_0$ and of the generator $g'_0$.
    \State Compute $c_1(\mathcal{D}')$.
    \While{$\theta'$ has not converged}
      \For{$t = 0, ..., n$}
        \State Sample $\{m^{(i)}\}_{i=1}^b \sim {c_1}_\sharp \mathbb{P}_{\mathcal{D}'}$ a batch from $c_1(\mathcal{D}')$.
        \State Sample $\{z^{(i)}\}_{i=1}^b \sim \mathbb{P}_{Z}$ a batch of prior samples.
        \State Update $c'_0$ by descending $L_c$.
      \EndFor
      \State Sample $\{z^{(i)}\}_{i=1}^b \sim \mathbb{P}_{Z}$ a batch of prior samples.
       \State Update $g'_0$ by descending  $-L_g$.
    \EndWhile
    \State \Return $g_1\circ g'_0$.
    \end{algorithmic}
\end{algorithm}

Our algorithm assumes that $g_1$ comes from an autoencoder $(c_1,g_1)$ that has been trained on a source dataset $\mathcal{D}$. The algorithm passes the second data set $\mathcal{D}'$ through the encoder $c_1$ and trains a MindGAN $(g_0',c'_0)$ on the encoded data $c_1(\mathcal{D}')$. One obtains the final generator as the composition $g_1\circ g'_0$ of $g_1$, the decoder of the autoencoder, with $g'_0$, the generator of the MindGAN. We denote by $L_c$ and $L_g$ the losses of the discriminator and the generator.

 In the remainder of the paper, we use for $L_g$ and $L_c$  the losses of a WGAN with gradient penalty \cite{Gulrajani:aa} :       $L_g := \mathbb{E}_{z\sim  \mathbb{P}_Z}  c'_0(g'_0(z)),$  $L_c := - \mathbb{E}_{m\sim {c_1}_\sharp \mathbb{P}_{\mathcal{D}'}} \ c'_0(m) +  L_g +\lambda \mathbb{E}_{m\sim {c_1}_\sharp \mathbb{P}_{\mathcal{D}'}, z\sim  \mathbb{P}_Z, \alpha\sim(0,1)} \lbrace (\Vert \nabla  c'_0(\alpha m +(1-\alpha)g'_0(z))\Vert_2-1)^2\rbrace$.

\begin{rem} This algorithm can be applied (with minor modifications) to conditional GANs. We refer to Appendix \ref{cond} for more details.
\end{rem}

{\bf Motivation for the approach} \label{mind2mind} The architectures of a generator and a critic of a GAN are symmetric to one another. The high-level features appear in $g_0$, the closest to the prior vector (resp. $c_0$, the closest to the prediction), while the low-level features are in $g_1$, the closest to the generated sample (resp $c_1$, the closest to the input image). Therefore, the analogy with cut and paste is to keep $g_1$ (the low level features of $\mathcal{D}$) and only learn the high level features $g_0'$ of the target data set $\mathcal{D}'$. However, the only way a generator can access to information from $\mathcal{D}'$ is through the critic $c$ via the value of $c\circ g=c_0'\circ c_1 \circ g_1 \circ g_0'$  \cite{Gulrajani:aa}. Hence, the information needs to back-propagate through $c_1\circ g_1$ to reach the weights of $g_0'$. Our main idea is to bypass this computational burden and
train directly $g_0'$ and $c_0'$ on $c_1(\mathcal{D}')$. But this requires that the source of $c_0'$ coincides with the target of $g_0'$. Therefore, we assume that $M=M'$, a first hint that autoencoders are relevant for us. 

A second hint comes from an analogy with humans learning a task, like playing tennis, for instance. One can model a player as a function $Z\overset{g}{\rightarrow}\chi$, where $\chi$ is the space of physical actions of the player. His/her coach can be understood as a function $\chi\overset{c}{\rightarrow}\mathbb{R}$, giving $c(g(z))$ as a feedback for an action $g(z)$ of $g$. The objective of the player can be understood as to be able to generate instances of the distribution $\mathcal{D}'$ on $\chi$ corresponding to the ``tennis moves". However, in practice, a coach rarely gives his/her feedback as a score. He instead describes what the player has done and should do instead. We can model this description as a vector $c_1(g(z))$ in $M$, the mind of $c=c_0\circ c_1$. In this analogy, $c_1$ corresponds to the coach analyzing the action, while $c_0$ corresponds to the coach giving a score based on this analysis. One can also decompose the player itself as $g=g_1\circ g_0$. Here $g_0$ corresponds to the player conceiving the set of movements he/she wants to perform and $g_1$ to the execution of these actions. Therefore, two conditions are needed for the coach to help efficiently his/her student : 
\begin{enumerate}
\item they must speak the same language in order to understand one each other,
\item the player must already have a good command of his/her motor system $g_1$.
\end{enumerate}

 In particular, the first constraint implies that they must share the same feature space, i.e., $M=M'$. A way to ensure that both constraints are satisfied is to check whether the player can reproduce a task described by the coach, i.e., that \begin{equation}\label{auto}g_1(c_1(x))=x\end{equation}
 holds. One recognizes in (\ref{auto}) the expression of an autoencoder. It is important to remark that usually, based on previous learning, a player already has a good motor control and he/she and his/her coach know how to communicate together. In other words $g_1$ and $c_1$ satisfy (\ref{auto}) before the training starts. Then the training consists only in learning $g_0'$ and $c_0'$ on the high level feature interpretations of the possible tennis movements, i.e., on $c_1(\mathcal{D}')$.

\section{Theoretical guarantee for convergence}\label{j'te l'garantis}

The following theorem (cf. Appendix \ref{proof!}) enables a very precise control of the convergence of the generated distribution towards the true target distribution. It gives an upper bound on the convergence error $er_{conv}$ in terms of the domain shift $er_{shift}$, the autoencoder quality $er_{AE}$ and the quality of the mindGAN $er_{mind}$.
\begin{theo} \label{controle}
There exist two positive constants $a$ and $b$ such that
\begin{eqnarray}
er_{conv} \leq   a \cdot er_{shift}+ er_{AE} +b \cdot er_{mind}.
\end{eqnarray}
\end{theo}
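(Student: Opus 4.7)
The plan is to read $er_{conv}$ as the Wasserstein distance $W\!\bigl((g_1\circ g_0')_\sharp\mathbb{P}_Z,\,\mathbb{P}_{\mathcal{D}'}\bigr)$ between the generated distribution of the transferred GAN and the target, $er_{mind}$ as $W\!\bigl((g_0')_\sharp\mathbb{P}_Z,\,(c_1)_\sharp\mathbb{P}_{\mathcal{D}'}\bigr)$, $er_{AE}$ as $\mathbb{E}_{x\sim\mathbb{P}_\mathcal{D}}\|g_1(c_1(x))-x\|$, and $er_{shift}$ as $W(\mathbb{P}_\mathcal{D},\mathbb{P}_{\mathcal{D}'})$. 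Under these definitions the inequality will drop out of the triangle inequality for $W$ applied through the intermediate measure $(g_1\circ c_1)_\sharp\mathbb{P}_{\mathcal{D}'}$:
$$er_{conv}\;\le\;W\!\bigl((g_1)_\sharp(g_0')_\sharp\mathbb{P}_Z,\,(g_1)_\sharp(c_1)_\sharp\mathbb{P}_{\mathcal{D}'}\bigr)\;+\;W\!\bigl((g_1\circ c_1)_\sharp\mathbb{P}_{\mathcal{D}'},\,\mathbb{P}_{\mathcal{D}'}\bigr).$$

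For the first summand I would invoke the contraction property of push-forwards: if $f$ is $L$-Lipschitz then $W(f_\sharp\mu,f_\sharp\nu)\le L\cdot W(\mu,\nu)$, which follows either from Kantorovich--Rubinstein (any $1$-Lipschitz test function $\varphi$ gives $\varphi\circ f$ that is $L$-Lipschitz) or from transporting any optimal coupling through $(f,f)$. Applied to $f=g_1$, this yields $\text{Lip}(g_1)\cdot er_{mind}$, which sets $b:=\text{Lip}(g_1)$.

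For the autoencoder summand I would first use the trivial coupling $(x,g_1(c_1(x)))$ to get $W\!\bigl((g_1\circ c_1)_\sharp\mu,\mu\bigr)\le\mathbb{E}_\mu\|g_1(c_1(x))-x\|$ for any $\mu$, and then compare the $\mathbb{P}_{\mathcal{D}'}$-expectation of $\psi(x):=\|g_1(c_1(x))-x\|$ to its $\mathbb{P}_\mathcal{D}$-expectation via Kantorovich--Rubinstein, noting that $\psi$ is Lipschitz with constant $1+\text{Lip}(g_1)\text{Lip}(c_1)$ by the triangle inequality. This gives
$$W\!\bigl((g_1\circ c_1)_\sharp\mathbb{P}_{\mathcal{D}'},\mathbb{P}_{\mathcal{D}'}\bigr)\;\le\;er_{AE}+\bigl(1+\text{Lip}(g_1)\text{Lip}(c_1)\bigr)\cdot er_{shift},$$
fixing $a:=1+\text{Lip}(g_1)\text{Lip}(c_1)$. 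Summing the two bounds concludes.

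The main obstacle is not analytic but conceptual: the statement is only meaningful once $g_1$ and $c_1$ are controlled to be Lipschitz (automatic for WGAN critics with gradient penalty, and standard for convolutional decoders on a compact $\chi$), and once one commits to the right definitions of the three error terms. A secondary technical point is the direction of the autoencoder comparison: because the autoencoder is trained on $\mathcal{D}$, $er_{AE}$ is naturally a source-side quantity, so the transfer of $\mathbb{E}\psi$ from $\mathbb{P}_{\mathcal{D}'}$ to $\mathbb{P}_\mathcal{D}$ is the step that produces the $er_{shift}$ contribution and forces $a>1$, whereas $er_{AE}$ appears with coefficient $1$; this matches the stated inequality exactly.
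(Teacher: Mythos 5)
Your argument is essentially the paper's own proof: the same triangle inequality through the intermediate measure $AE(\mathbb{P}_{\mathcal{D}'})=(g_1\circ c_1)_\sharp\mathbb{P}_{\mathcal{D}'}$, the same push-forward contraction lemma for Lipschitz maps applied to $g_1$ to produce $b=\mathrm{Lip}(g_1)$, and the same constant $a=1+\mathrm{Lip}(g_1\circ c_1)$ in front of the shift term. The only divergence is your reading of $er_{AE}$: you take the expected reconstruction error $\mathbb{E}_{x\sim\mathbb{P}_{\mathcal{D}}}\Vert g_1(c_1(x))-x\Vert$, whereas the paper defines $er_{AE}:=W(AE(\mathbb{P}_{\mathcal{D}}),\mathbb{P}_{\mathcal{D}})$, which is dominated by your quantity (exhibit the coupling $(x,g_1(c_1(x)))$); so what you prove is the same inequality with a slightly larger middle term, not literally the paper's statement. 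To recover the paper's sharper form, replace your coupling-plus-Kantorovich--Rubinstein treatment of $W(AE(\mathbb{P}_{\mathcal{D}'}),\mathbb{P}_{\mathcal{D}'})$ by two further applications of the triangle inequality, through $\mathbb{P}_{\mathcal{D}}$ and $AE(\mathbb{P}_{\mathcal{D}})$, and bound the remaining term $W(AE(\mathbb{P}_{\mathcal{D}}),AE(\mathbb{P}_{\mathcal{D}'}))\le\mathrm{Lip}(g_1\circ c_1)\,W(\mathbb{P}_{\mathcal{D}},\mathbb{P}_{\mathcal{D}'})$ with the same contraction lemma; this yields the identical $a$ without ever invoking the pointwise reconstruction error. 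Your closing technical caveat also matches the paper's: neural networks are locally Lipschitz, hence Lipschitz on compact supports, and the paper additionally remarks that one should choose a compactly supported prior $\mathbb{P}_Z$ so that the support of ${g_0'}_\sharp\mathbb{P}_Z$ is compact and the contraction lemma legitimately applies to the first summand.
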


To be more precise, with the notations $er_{conv}=W(\mathbb{P}_{\mathcal{D}'},\mathbb{P}_\theta')$, $er_{shift}=W(\mathbb{P}_\mathcal{D}, \mathbb{P}_{\mathcal{D}'})$, $er_{AE}=W(AE(\mathbb{P}_\mathcal{D}),\mathbb{P}_\mathcal{D})$ and  $er_{mind}=W({c_1}_{\sharp}\mathbb{P}_{\mathcal{D}'},{g'_0}_\sharp\mathbb{P}_{Z})$. 

Very concretely, theorem \ref{controle} tells us that in order to control the convergence of the transferred GAN towards the distribution of the target dataset $\mathcal{D}'$, we need the exact analogues of steps 1-3 of \ref{steps} : \begin{enumerate}
\item  choose two datasets $\mathcal{D}'$ and $\mathcal{D}$ very similar, i.e., $W(\mathbb{P}_\mathcal{D}, \mathbb{P}_{\mathcal{D}'})$ small,
\item  choose a good autoencoder $(c_1,g_1)$, i.e., $W(\mathbb{P}_\mathcal{D},AE(\mathbb{P}_\mathcal{D}))$ small,
\item  train well the MindGAN $(g_0,c_0)$ on $c_1(\mathcal{D}')$, i.e., $W({c_1}_{\sharp}\mathbb{P}_{\mathcal{D}'},\mathbb{P}'^0_\theta)$ small.
\end{enumerate}

\begin{rem} The main theorem of \cite{sinkhorn}, theorem 3.1, guarantees the convergence of a Wasserstein autoencoder (WAE). We show in Appendix \ref{proof!} that it is a  direct consequence of our theorem.
\end{rem}

\section{Evaluation} \label{results}

{\bf Datasets}. We have tested our algorithm at resolution $28\times 28$ in grey levels (scaled in range $[-1;1]$) on  MNIST \cite{mnist}, KMNIST \cite{kmnist}, FashionMNIST \cite{Fashionmnist}
(60 000 images each) and at resolution $1024\times  1024$ in color on CelabaHQ \cite{growing} (30 000 images) from models trained on the 60 000 first images of FFHQ \cite{StyleGAN} (which consists of 70 000 images), using the {\bf library} Pytorch. The {\bf hardware} for our experiments with $28\times 28$ images consisted of a desktop with 16 Go of RAM and a GPU Nvidia GTX 1080 TI. Most of our experiments with HD color images used a node-gpu \cite{JeanZay} with two CPU Intel Cascade Lake 6248 and a GPU Nvidia V100 SXM2 32 Go. We have also benchmarked the running time on entry level GPU GTX 1060. Despite its limitations \cite{Borji:aa}, we have used {\it FID} (Frechet Inception Distance) \cite{Heusel:a} as {\bf metric}. It is the current standard for evaluations of GANs.   Our {\bf code} is available at \cite{git}.

 {\bf At resolution 28$\times $28}.  The encoder $c_1$ has three convolutional layers with instance normalisation (in) and relu : 32+in+relu, 64+in+relu, 128+in+relu, followed by a single dense layer 256+tanh. The decoder $g_1$ has a single dense layer 4*4*64 + relu and three deconvolutional layers with batch normalisation (bn) : 64+bn+relu, 64+bn+relu, 32+bn+relu, 1+tanh. The MindGAN is a MLP WGAN whose generator $g_0$ consists in three dense layers : 512+bn+relu, 512+bn+relu, 256+tanh and the critic $c_0$ consists also in three dense layers :  256+relu, 256+relu,1. Our {\bf hyper-parameters} :  learning rate of $10^{-3}$,  batch size of 128 for all the networks, 80 epochs for $(g_1,c_1)$ and 100 for the other networks, gradient penalty with $\lambda=10$, beta parameters in Adam optimizer $(.9,.9)$ for $(g_1,c_1)$, $(.1,.5)$ for the other networks.

 We report the results with $\mathcal{D}'=\text{MNIST}$ for $c_1$ trained on each dataset (see  Appendix \ref{Appendice: exp} for other $\mathcal{D}'$). We compare our results to a Vanilla WGAN with architecture ($g_1\circ g_0$, $c_0\circ c_1$), so that the number of parameters agrees, for fair comparison.

{\bf Baseline 1} Vanilla WGAN with gradient penalty trained on MNIST. We have used for the Vanilla WGAN a model similar to the one used in  \cite{Gulrajani:aa}. However, this model would not converge properly, due to a problem of "magnitude race." We have therefore added an $\epsilon$-term \cite{growing}, \cite{Aigner:aa} to ensure its convergence. Our results appear in the first graph on the {\it l.h.s} of figure \ref{figure::graphes MindGAN}, \begin{figure}[t]
    \centering
    \includegraphics[width=.22\textwidth]{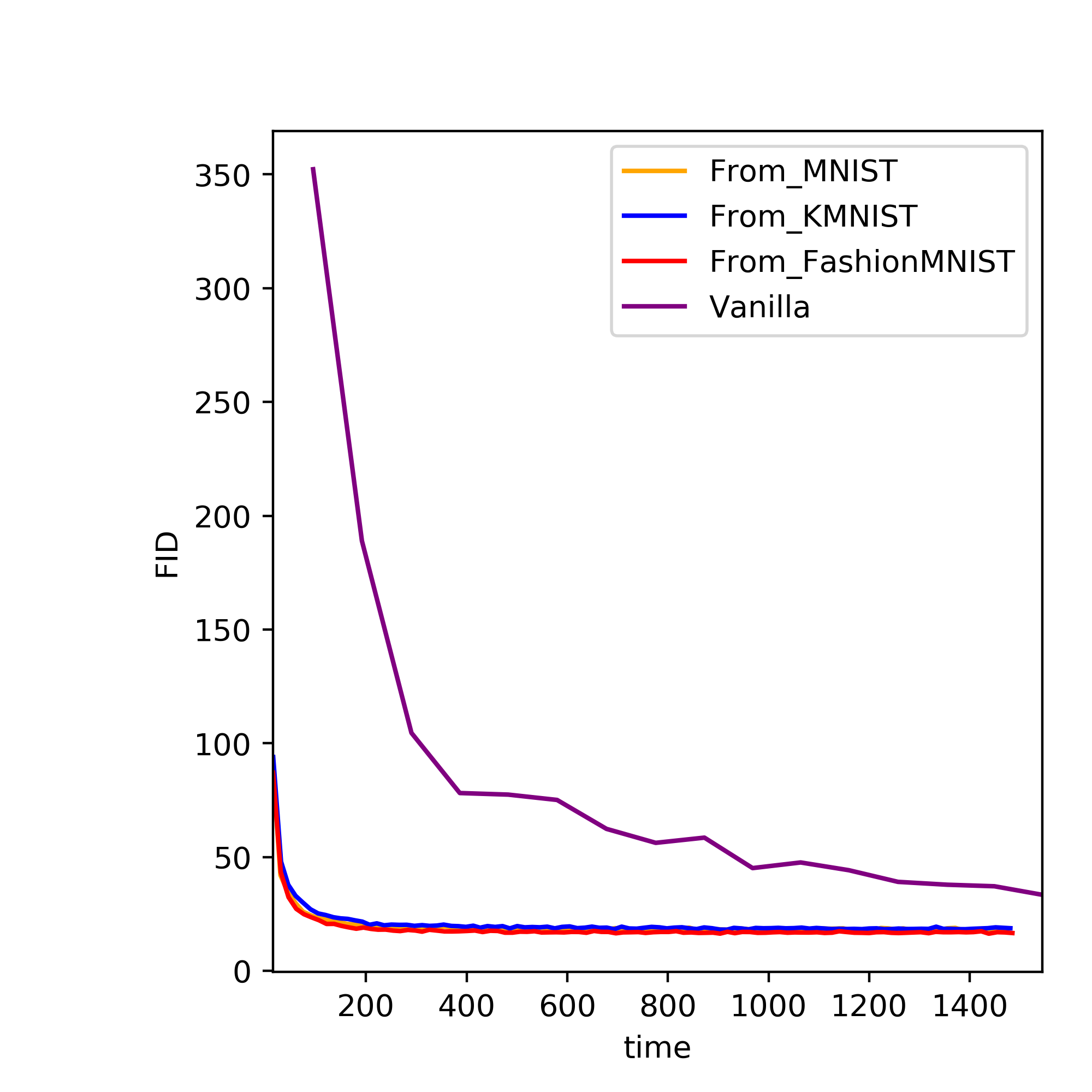}    \includegraphics[width=.22\textwidth]{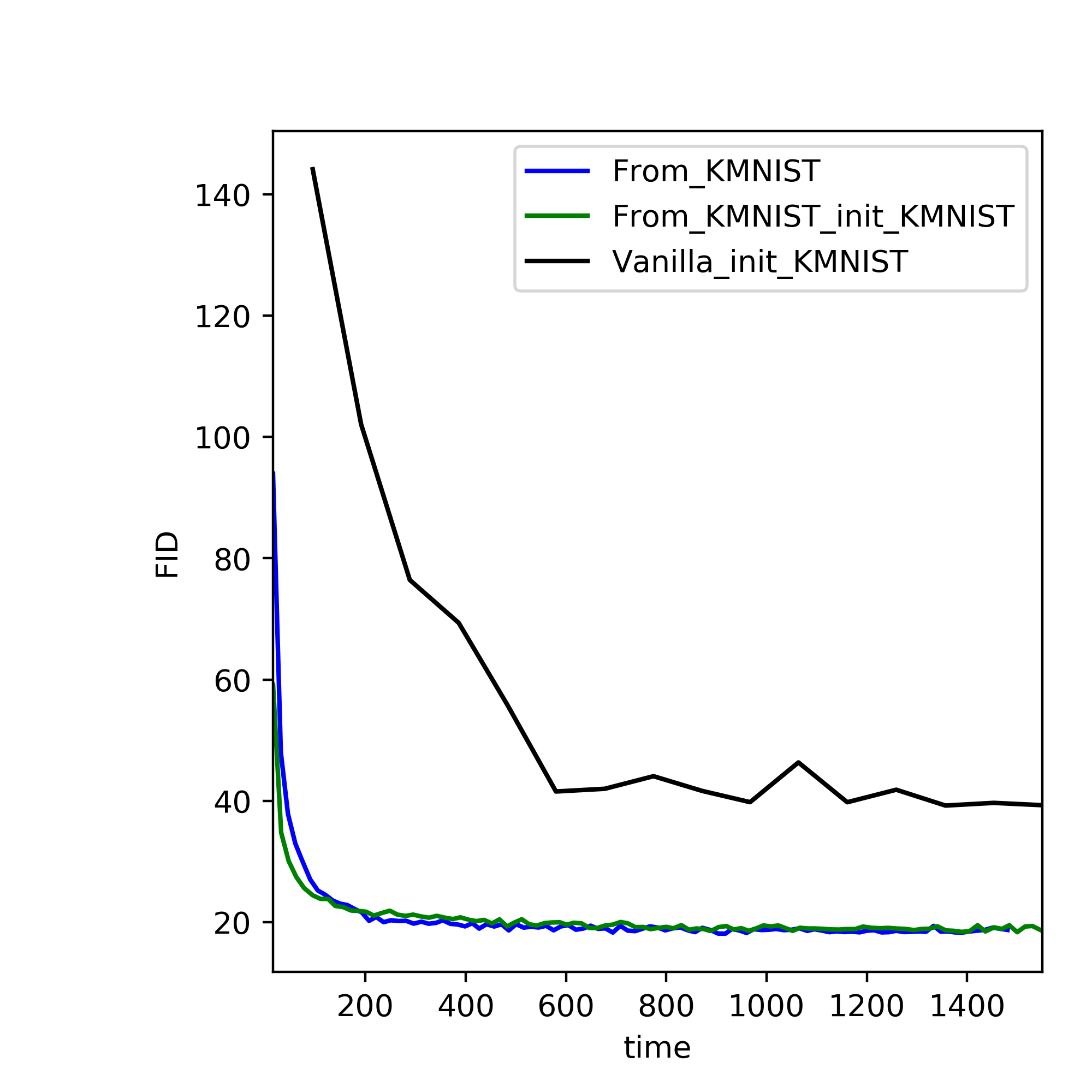} \includegraphics[width=.46\textwidth]{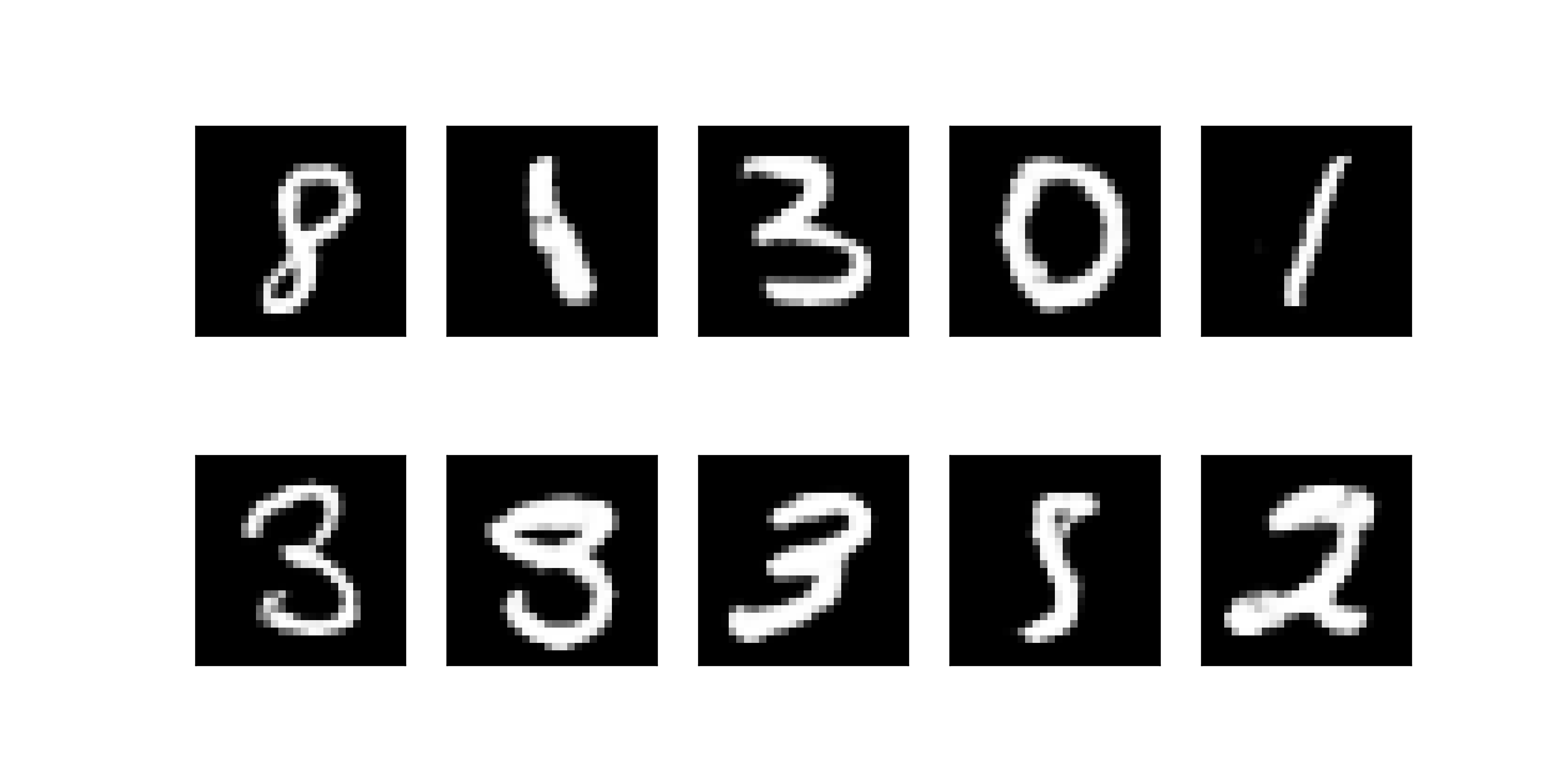}

\medskip
    \caption{
        Mind2Mind training and samples in $28\times 28$.
            }
    \label{figure::graphes MindGAN}
\end{figure} with time in seconds in abscissa. One can observe extremely fast convergence of the mindGAN to good scores, in comparison with the Vanilla WGAN. Note that we have not smoothed any of the curves. This observation suggests that our approach, beyond the gain in training time, provides a regularising mechanism. The stability of the training confirms this hypothesis. Indeed, {\bf statistics over 10 runs} demonstrate a very small standard deviation, as shown in figure \ref{figure::stats} in the supplementary material.  In particular, this regularization enabled us to use a much bigger learning rate ($10^{-3}$ instead of $10^{-4}$), adding to the speed of convergence. In terms of epochs, the MindGAN and the Vanilla WGAN learn similarly (cf Appendix \ref{Appendice: exp}).

{\bf Baseline 2} {\it Fine tuning} studied in \cite{Wang:2018aa}. We have trained a Vanilla WGAN with gradient penalty on $\mathcal{D}=$KMNIST, the dataset the closest to $\mathcal{D}'=$MNIST. We have then {\it fine-tuned} it on $\mathcal{D}'$, i.e., trained a new network on $\mathcal{D}'$, initialized with the weights of this previously trained Vanilla WGAN. 
We display it on the  {\it r.h.s} of figure \ref{figure::graphes MindGAN} 
  under the name {\it Vanilla init Kmnist}, together with our best result, namely a Mind2Mind transfer on $\mathcal{D}'=$MNIST from $\mathcal{D}=$KMNIST. One can observe that the Mind2Mind approach achieves significantly better performances in FID. 
  
   The bottom of figure \ref{figure::graphes MindGAN} displays samples of images produced by a MindGAN trained on MNIST images encoded using a KMNIST autoencoder.
 \begin{figure}[t]
    \centering
    \includegraphics[width=.40\textwidth]{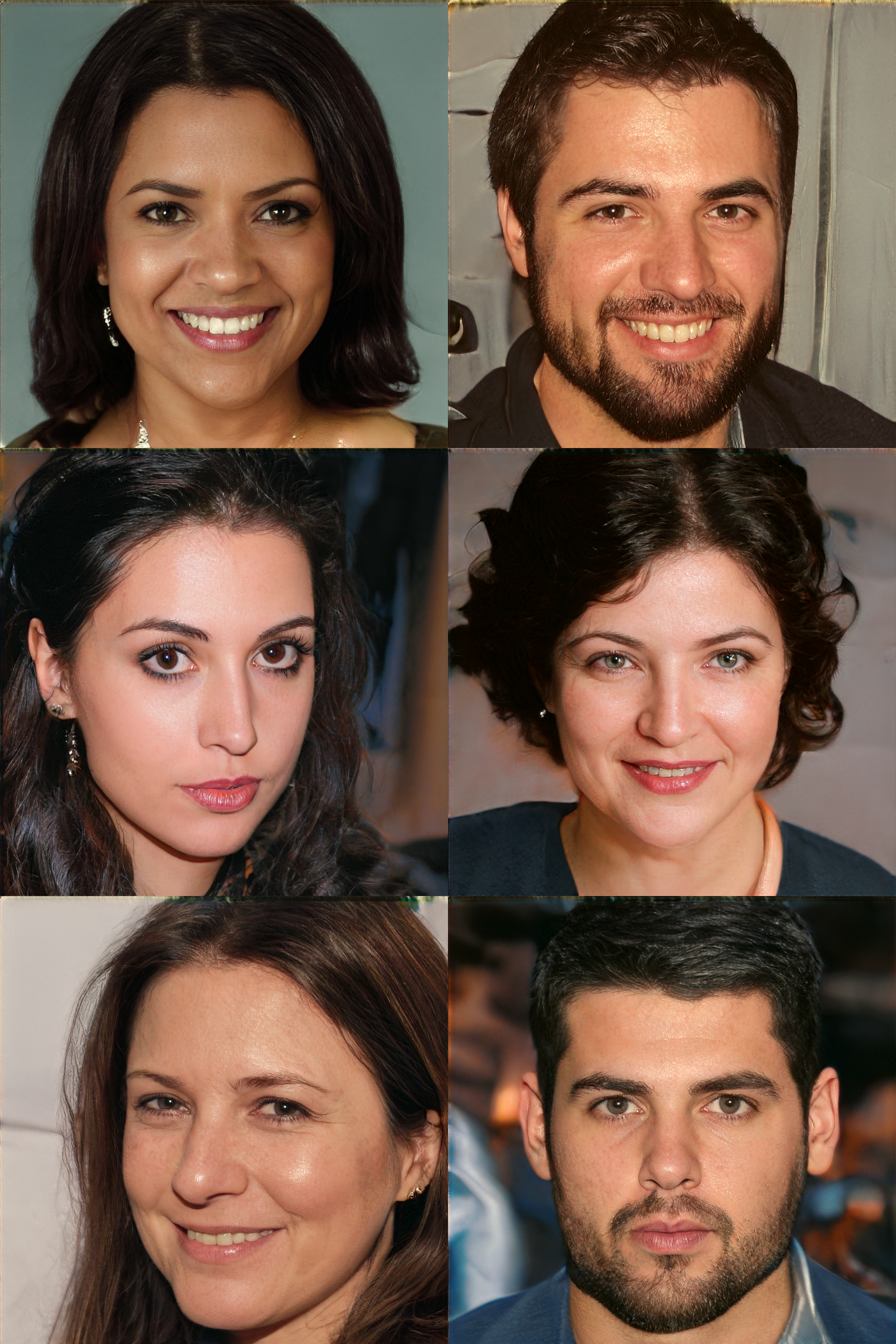} 
\medskip
    \caption{
        Mind2Mind on CelebaHQ transfered from FFHQ.
            }
    \label{figure::mindgan k mnist}
\end{figure}

{\bf At resolution 1024$\times $1024}. We have worked with the encoder and decoder of the ALAE model \cite{ALAE} pre-trained on FFHQ available at \cite{gitALAE}. The generator of our MindGAN has an input dimension of 128,  three hidden dense layers with relu activation (128, 256, 512) followed by a dense output layer with 512 units (no activation). The critic has an input dimension of 512, three hidden dense layers with relu activation (512, 256, 128) followed by a dense output layer with one unit (no activation). Its hyperparameters were lr $= 1e^{-3}$, betas $= [0., 0.5]$, gradient penalty $= 10$, epsilon penalty $= 1e^{-2}$, batch size = 256, critic iteration = 5, epochs = 300.

We have encoded the dataset CelebaHQ \cite{Karras2018ProgressiveGO} and then trained a mindGAN on a V100  for 300 epochs at 16.95 s/epoch during 1h24m.  On a GTX 1060 the same training takes 30.67 s/epoch. We have reached (over 5 runs) an average FID of 15.18 with an average standard deviation of 0.8. This is a better result than the FID score (19.21) of an ALAE directly trained from scratch (see table 5  from \cite{ALAE}).  Samples are displayed on figure \ref{figure::mindgan k mnist}. Compared to the results reported on the ProGAN and StyleGAN official repositories \cite{ProGAN}, \cite{gitStyleGAN}, our training (1.5 hour) on {\bf 1  GPU V100} is roughly {\bf 224 times faster} than the training of a {\bf proGAN} (2 weeks) and {\bf 656 times faster} than the training of a {\bf StyleGAN} (41 days). The training of a MindGAN on a {\bf GTX 1060} is about {\bf  112 times faster} than the training  of a proGAN and {\bf  328 times faster} than the training of a StyleGAN, both on a V100.  Note that a GTX 1060 costs around 200 \$ while a V100 is around 8000 \$. One has to mention however that, on CelebaHQ, the FID of a ProGAN is  8.03 (see table 5  from \cite{ALAE}), while the FID of a StyleGAN is 4.40 (see table 4 of \cite{StyleGAN}), so both are significantly better than ours. We see two factors that can explain the acceleration of the training. The first one is that there are much less parameters to train. Indeed, our mindGAN in HD has around 870K parameters, while the ALAE model (based on a StyleGAN architecture) has 51M parameters. So this already represents a difference of almost two orders of magnitude. One can suspect that the rest of the difference comes from the fact that we bypass 18 layers in the computation of the backpropagation. We believe that the validation of this hypothesis deserves a careful experimental study.

\section{Comparison to other works}\label{compare}

{\bf Wasserstein autoencoders}. WAEs do not provide a solution to the question we address here. Indeed, \cite{Tolstikhin:aa} do not consider at all transfer learning and work only with a single data set $\mathcal{D}$ at a time. Its goal is rather to give a new approach to Variational Auto Encoders based on the use of the Wasserstein distance.

{\bf Adversarial learned inference}. The works on adversarial leaned inference offer an alternative way to train autoencoders. They do not address transfer learning, however, it is possible to choose these types of auto-encoders as building block for our transfer method. We haven't yet conducted experiments with such auto-encoders. In particular, we do not know if theoretical results similar to theorem \ref{controle} can be obtained in this setting.

 {\bf Adversarial Latent Autoencoders} The architecture of this method is very close to the architecture we use in our work, though the objectives are very different : their point is to disentangle representations in order to be able to control the features, whereas our objective is to do transfer. The similarity in architecture probably explains why ALAE are very suited to our method. We did not have enough time to test wether the disentanglement properties of ALAE are preserved via transfer. We plan to investigate further this question.

{\bf Fine-Tuning.} Our approach is different but can be, in theory, combined with fine-tuning. Indeed, one can initialise the training of our MindGAN $(g_0,c_0)$ of algorithm \ref{algo::transfer} on $c_1(\mathcal{D'})$ with another MindGAN trained on $c_1(\mathcal{D})$. We have tried this on the MNISTs datasets; see figure \ref{figure::graphes MindGAN}, but no significant improvement has been observed, We have compared fine-tuning against Mind2Mind transfer for WGANs and report better results in terms of FID. Note that our theorem \ref{controle}  gives a theoretical justification, in our setting, of the observation of \cite{Wang:2018aa} of the influence of the domain shift (distance between $\mathcal{D}$ and $\mathcal{D}'$) on the convergence of the learning.

\section{Disadvantages}

Our first limitation is that we do not learn the transferred layers during the transfer. On the one hand, it is a feature as it enables faster learning. On the other hand, it is possible that at the asymptote, i.e., after the training has converged, the transferred mindGAN offers a worse quality than a Vanilla WGAN. We have not observed this phenomenon with the ALAE architecture. To the contrary, we obtained a better  asymptote since we got a FID of 15.18 for the transferred model  compared to the FID of 19.21 of the ALAE model trained from scratch on CelebaHQ. Maybe this is due to the fact that we have focussed the training on a more significant part of the network (the MindGAN). 

The second limitation of our approach is that it does not provide improvement of the training on limited data. However, such a training was not the objective of our work as we believe that acceleration of the training is a useful goal on itself. 

Our results, compared to state of the art baselines, show a worse performance in terms of quality (FID of 15.18 compared to 8.03 for ProGAN and 4.40 for StyleGAN). However, this should not be an obstacle to the use of this algorithm. Indeed, the targeted users are practitioners without significant computing capacity. Their need is to reach a reasonable quality in a short time. With this regard, in terms of wall clock time, this algorithm learns roughly 224 times faster than a ProGAN and 656 times faster than a StyelGAN on CelebaHQ.  So merits of our algorithm will depend on the tradeoff between the needs  of the users in terms of quality and their constraints in terms of computing capacity.

 The bottleneck in our approach is the lack of a zoo of models of autoencoders trained on diverse datasets in high resolution. We hope that our method, in conjunction with the ALAE architecture, will lead the main players in the industry to train such models and make them accessible to the community.

\section{Conclusion}

We introduced a method that enables transfer learning for GANs. Given an autoencoder trained on a source dataset, one passes the target dataset through the encoder and uses the encoded features to train a GAN, called a MindGAN, in the latent space of the autoencoder. Composing the MindGAN with the decoder provides the transferred GAN, a generator for the target dataset. We provided theoretical results that guaranty that the transferred GAN converges to the target data. We have demonstrated that our method enables  to train GANs much faster (between 6 and 656 times faster, depending on the size of the network) than state of the art methods, in both $28\times 28$ gray scale and $1024\times 1024$ HQ color datasets.

\section*{Broader impact}

The main impact of our work will be a democratization of the use of GANs. Indeed, without the barrier of computing time/costs, GANs for high quality images will become accessible to a pool of practitioners much broader than the researchers/engineers of big industry and academic labs. As a corrollary, one can expect a development of research in this field proportional to the increase of number of researchers who will gain access to these tools.  

Affordable computing time will offer the possibility to customize models for specific needs and datasets. This can lead to new applications in movie industry, virtual assistants or even telecommunications (it may be more efficient to learn a model of a person, send it and use it to reconstruct a video signal than to transmit the signal itself) to name a few. But this can also lead to malicious uses, in particular for the generation fake profiles and deep fakes that can be used for phishing or disinformation.

{\small
\bibliography{biblio}

\begin{thebibliography}{33}
\providecommand{\natexlab}[1]{#1}
\providecommand{\url}[1]{\texttt{#1}}
\providecommand{\urlprefix}{URL }
\expandafter\ifx\csname urlstyle\endcsname\relax
  \providecommand{\doi}[1]{doi:\discretionary{}{}{}#1}\else
  \providecommand{\doi}{doi:\discretionary{}{}{}\begingroup
  \urlstyle{rm}\Url}\fi

\bibitem[{Aigner and K{\"{o}}rner(2018)}]{Aigner:aa}
Aigner, S.; and K{\"{o}}rner, M. 2018.
\newblock FutureGAN: Anticipating the Future Frames of Video Sequences using
  Spatio-Temporal 3d Convolutions in Progressively Growing Autoencoder GANs.
\newblock \emph{CoRR} abs/1810.01325.
\newblock \urlprefix\url{http://arxiv.org/abs/1810.01325}.

\bibitem[{ALAE github repository()}]{gitALAE}
ALAE github repository. 2020.
\newblock \urlprefix\url{https://github.com/podgorskiy/ALAE}.

\bibitem[{Arjovsky, Chintala, and Bottou(2017)}]{Arjovsky:2017aa}
Arjovsky, M.; Chintala, S.; and Bottou, L. 2017.
\newblock Wasserstein Generative Adversarial Networks.
\newblock In \emph{Proceedings of the 34nd International Conference on Machine
  Learning, {ICML} 2017, Sydney, Australia, 7-9 August, 2017}.
\newblock
  \urlprefix\url{http://leon.bottou.org/papers/arjovsky-chintala-bottou-2017}.

\bibitem[{Belghazi et~al.(2018)Belghazi, Rajeswar, Mastropietro, Rostamzadeh,
  Mitrovic, and Courville}]{hali}
Belghazi, M.~I.; Rajeswar, S.; Mastropietro, O.; Rostamzadeh, N.; Mitrovic, J.;
  and Courville, A.~C. 2018.
\newblock Hierarchical Adversarially Learned Inference.
\newblock \emph{CoRR} abs/1802.01071.

\bibitem[{Bogachev(2007)}]{Bo}
Bogachev, V.~I. 2007.
\newblock \emph{Measure Theory}.
\newblock Springer Verlag.

\bibitem[{Borji(2019)}]{Borji:aa}
Borji, A. 2019.
\newblock Pros and cons of GAN evaluation measures.
\newblock \emph{Computer Vision and Image Understanding} 179: 41 -- 65.
\newblock ISSN 1077-3142.
\newblock \doi{https://doi.org/10.1016/j.cviu.2018.10.009}.
\newblock
  \urlprefix\url{http://www.sciencedirect.com/science/article/pii/S1077314218304272}.

\bibitem[{Clanuwat et~al.(2018)Clanuwat, Bober-Irizar, Kitamoto, Lamb,
  Yamamoto, and Ha}]{kmnist}
Clanuwat, T.; Bober-Irizar, M.; Kitamoto, A.; Lamb, A.; Yamamoto, K.; and Ha,
  D. 2018.
\newblock Deep Learning for Classical Japanese Literature.
\newblock \emph{CoRR} abs/1812.01718.

\bibitem[{Donahue et~al.(2014)Donahue, Jia, Vinyals, Hoffman, Zhang, Tzeng, and
  Darrell}]{donahue:a}
Donahue, J.; Jia, Y.; Vinyals, O.; Hoffman, J.; Zhang, N.; Tzeng, E.; and
  Darrell, T. 2014.
\newblock Decaf: A deep convolutional activation feature for generic visual
  recognition.
\newblock In \emph{International conference on machine learning}, 647--655.

\bibitem[{Donahue, Kr{\"a}henb{\"u}hl, and Darrell(2017)}]{BiGan}
Donahue, J.; Kr{\"a}henb{\"u}hl, P.; and Darrell, T. 2017.
\newblock Adversarial Feature Learning.
\newblock \emph{CoRR} abs/1605.09782.

\bibitem[{Dumoulin et~al.(2017)Dumoulin, Belghazi, Poole, Lamb, Arjovsky,
  Mastropietro, and Courville}]{ali}
Dumoulin, V.; Belghazi, I.; Poole, B.; Lamb, A.; Arjovsky, M.; Mastropietro,
  O.; and Courville, A.~C. 2017.
\newblock Adversarially Learned Inference.
\newblock \emph{CoRR} abs/1606.00704.

\bibitem[{Goodfellow et~al.(2014)Goodfellow, Pouget-Abadie, Mirza, Xu,
  Warde-Farley, Ozair, Courville, and Bengio}]{Goodfellow:2014aa}
Goodfellow, I.; Pouget-Abadie, J.; Mirza, M.; Xu, B.; Warde-Farley, D.; Ozair,
  S.; Courville, A.; and Bengio, Y. 2014.
\newblock Generative Adversarial Nets.
\newblock In Ghahramani, Z.; Welling, M.; Cortes, C.; Lawrence, N.~D.; and
  Weinberger, K.~Q., eds., \emph{Advances in Neural Information Processing
  Systems 27}, 2672--2680. Curran Associates, Inc.
\newblock
  \urlprefix\url{http://papers.nips.cc/paper/5423-generative-adversarial-nets.pdf}.

\bibitem[{Gulrajani et~al.(2017)Gulrajani, Ahmed, Arjovsky, Dumoulin, and
  Courville}]{Gulrajani:aa}
Gulrajani, I.; Ahmed, F.; Arjovsky, M.; Dumoulin, V.; and Courville, A.~C.
  2017.
\newblock Improved Training of Wasserstein GANs.
\newblock In \emph{NIPS}.

\bibitem[{Haidar and Rezagholizadeh(2019)}]{DistillGAN}
Haidar, M.~A.; and Rezagholizadeh, M. 2019.
\newblock TextKD-GAN: Text Generation using KnowledgeDistillation and
  Generative Adversarial Networks.

\bibitem[{Heusel et~al.(2017)Heusel, Ramsauer, Unterthiner, Nessler, and
  Hochreiter}]{Heusel:a}
Heusel, M.; Ramsauer, H.; Unterthiner, T.; Nessler, B.; and Hochreiter, S.
  2017.
\newblock GANs Trained by a Two Time-Scale Update Rule Converge to a Local Nash
  Equilibrium.
\newblock In Guyon, I.; Luxburg, U.~V.; Bengio, S.; Wallach, H.; Fergus, R.;
  Vishwanathan, S.; and Garnett, R., eds., \emph{Advances in Neural Information
  Processing Systems 30}, 6626--6637. Curran Associates, Inc.

\bibitem[{Jean Zay website()}]{JeanZay}
Jean Zay website. 2020.
\newblock
  \urlprefix\url{http://www.idris.fr/jean-zay/jean-zay-presentation.html}.

\bibitem[{Karras et~al.(2018{\natexlab{a}})Karras, Aila, Laine, and
  Lehtinen}]{growing}
Karras, T.; Aila, T.; Laine, S.; and Lehtinen, J. 2018{\natexlab{a}}.
\newblock Progressive Growing of GANs for Improved Quality, Stability, and
  Variation.
\newblock In \emph{6th International Conference on Learning Representations,
  {ICLR} 2018, Vancouver, BC, Canada, April 30 - May 3, 2018, Conference Track
  Proceedings}.
\newblock \urlprefix\url{https://openreview.net/forum?id=Hk99zCeAb}.

\bibitem[{Karras et~al.(2018{\natexlab{b}})Karras, Aila, Laine, and
  Lehtinen}]{Karras2018ProgressiveGO}
Karras, T.; Aila, T.; Laine, S.; and Lehtinen, J. 2018{\natexlab{b}}.
\newblock Progressive Growing of GANs for Improved Quality, Stability, and
  Variation.
\newblock \emph{ArXiv} abs/1710.10196.

\bibitem[{{Karras}, {Laine}, and {Aila}(2019)}]{StyleGAN}
{Karras}, T.; {Laine}, S.; and {Aila}, T. 2019.
\newblock A Style-Based Generator Architecture for Generative Adversarial
  Networks.
\newblock In \emph{2019 IEEE/CVF Conference on Computer Vision and Pattern
  Recognition (CVPR)}, 4396--4405.

\bibitem[{LeCun and Cortes(2010)}]{mnist}
LeCun, Y.; and Cortes, C. 2010.
\newblock {MNIST} handwritten digit database.
\newblock http://yann.lecun.com/exdb/mnist/.
\newblock \urlprefix\url{http://yann.lecun.com/exdb/mnist/}.

\bibitem[{Makhzani et~al.(2015)Makhzani, Shlens, Jaitly, and Goodfellow}]{AAE}
Makhzani, A.; Shlens, J.; Jaitly, N.; and Goodfellow, I.~J. 2015.
\newblock Adversarial Autoencoders.
\newblock \emph{CoRR} abs/1511.05644.

\bibitem[{Mind2mind github repository()}]{git}
Mind2mind github repository. 2020.
\newblock \urlprefix\url{https://github.com/masked for review/mindgan}.

\bibitem[{Patrini et~al.(2019)Patrini, Carioni, Forr{\'e}, Bhargav, Welling,
  van~den Berg, Genewein, and Nielsen}]{sinkhorn}
Patrini, G.; Carioni, M.; Forr{\'e}, P.; Bhargav, S.; Welling, M.; van~den
  Berg, R.; Genewein, T.; and Nielsen, F. 2019.
\newblock Sinkhorn AutoEncoders.
\newblock In \emph{UAI}.

\bibitem[{Pidhorskyi, Adjeroh, and Doretto(2020)}]{ALAE}
Pidhorskyi, S.; Adjeroh, D.~A.; and Doretto, G. 2020.
\newblock Adversarial Latent Autoencoders.
\newblock In \emph{Proceedings of the IEEE Computer Society Conference on
  Computer Vision and Pattern Recognition (CVPR)}.
\newblock [to appear].

\bibitem[{ProGAN github repository()}]{ProGAN}
ProGAN github repository. 2018.
\newblock
  \urlprefix\url{https://github.com/tkarras/progressive_growing_of_gans}.

\bibitem[{Salimans et~al.(2016)Salimans, Goodfellow, Zaremba, Cheung, Radford,
  and Chen}]{Salimans:2016aa}
Salimans, T.; Goodfellow, I.~J.; Zaremba, W.; Cheung, V.; Radford, A.; and
  Chen, X. 2016.
\newblock Improved Techniques for Training GANs.
\newblock In \emph{NIPS}.

\bibitem[{Shan et~al.(2018)Shan, Zhang, Yang, Kruger, Kalra, Sun, Cong, and
  Wang}]{medical}
Shan, H.; Zhang, Y.; Yang, Q.; Kruger, U.; Kalra, M.~K.; Sun, L.; Cong, W.; and
  Wang, G. 2018.
\newblock 3-D Convolutional Encoder-Decoder Network for Low-Dose CT via
  Transfer Learning From a 2-D Trained Network.
\newblock \emph{IEEE Transactions on Medical Imaging} 37: 1522--1534.

\bibitem[{StyleGAN github repository()}]{gitStyleGAN}
StyleGAN github repository. 2019.
\newblock \urlprefix\url{https://github.com/NVlabs/stylegan}.

\bibitem[{Tan et~al.(2018)Tan, Sun, Kong, Zhang, Yang, and Liu}]{Tan2018ASO}
Tan, C.; Sun, F.; Kong, T.; Zhang, W.; Yang, C.; and Liu, C. 2018.
\newblock A Survey on Deep Transfer Learning.
\newblock In \emph{ICANN}.

\bibitem[{Tolstikhin et~al.(2017)Tolstikhin, Bousquet, Gelly, and
  Sch{\"o}lkopf}]{Tolstikhin:aa}
Tolstikhin, I.~O.; Bousquet, O.; Gelly, S.; and Sch{\"o}lkopf, B. 2017.
\newblock Wasserstein Auto-Encoders.
\newblock \emph{CoRR} abs/1711.01558.

\bibitem[{Villani(2008)}]{Villani}
Villani, C. 2008.
\newblock \emph{Optimal transport: old and new}, volume 338.
\newblock Springer Science \& Business Media.

\bibitem[{Wang et~al.(2018)Wang, Wu, Herranz, van~de Weijer, Gonzalez-Garcia,
  and Raducanu}]{Wang:2018aa}
Wang, Y.; Wu, C.; Herranz, L.; van~de Weijer, J.; Gonzalez-Garcia, A.; and
  Raducanu, B. 2018.
\newblock Transferring GANs: Generating Images from Limited Data.
\newblock In \emph{ECCV}.

\bibitem[{Xiao, Rasul, and Vollgraf(2017)}]{Fashionmnist}
Xiao, H.; Rasul, K.; and Vollgraf, R. 2017.
\newblock Fashion-MNIST: a Novel Image Dataset for Benchmarking Machine
  Learning Algorithms.
\newblock \emph{CoRR} abs/1708.07747.

\bibitem[{Zhao et~al.(2018)Zhao, Kim, Zhang, Rush, and LeCun}]{ARAE}
Zhao, J.~J.; Kim, Y.; Zhang, K.; Rush, A.~M.; and LeCun, Y. 2018.
\newblock Adversarially Regularized Autoencoders.
\newblock In \emph{ICML}.

\end{thebibliography}
}

\newpage
\appendix
 \section{Proof of things}\label{proof!}

We first recall notions that will be needed. We then give the proof of our main theoretical result.

\subsection{Wasserstein distance and Lipschitz functions}

In the following, all the metric spaces considered will be subsets of normed vector spaces, with the metric on the subset induced by the norm.

First, one recalls some definitions (more details can be found in \cite{Villani}).
\begin{def*}[transference plan]
    Let $(X, \mathbb{P}_X)$ and $(Y,\mathbb{P}_Y)$ be two probability spaces. A \emph{transference plan $\gamma$} is a measure on $X \times Y $ such that :
    $$
        \int_{A\times Y}d\gamma = \mathbb{P}_X(A),
    $$
    and,
    $$
        \int_{X\times B}d\gamma = \mathbb{P}_Y(B).
    $$
    $\mathbb{P}_X$ and $\mathbb{P}_Y$ are called the \textbf{marginals} of $\gamma.$ The set of transference plans with marginals $\mathbb{P}_X$ and $\mathbb{P}_Y$ is denoted by $\Pi(\mathbb{P}_X,\mathbb{P}_Y)$.
\end{def*}

\begin{def*}[p-Wasserstein distance]
    Let $(X, \Vert . \Vert)$ be a metric space and $p \in [1,+\infty)$. For two probability measure $\mathbb{P}_1$, $\mathbb{P}_2$ on $X$, the \emph{$p$-Wasserstein distance} between $\mathbb{P}_1$ and $\mathbb{P}_2$ is defined by the following
    $$
        W_p(\mathbb{P}_1,\mathbb{P}_2)  = \left( \inf_{\gamma \in \Pi(\mathbb{P}_1,\mathbb{P}_2)} \mathbb{E}_{(x,y)\sim  \gamma}\Vert x -  y\Vert^p \right  )^{\frac{1}{p}}.
    $$

\end{def*}
    
In this paper, we used the notation $W(\mathbb{P}_1,\mathbb{P}_2)$ instead of $W_1(\mathbb{P}_1,\mathbb{P}_2)$.

\begin{def*}[Lipschitz function]
    Let $\phi : X \rightarrow Y$ be a map between metric spaces $X$ and $Y$. It is called a \emph{$C$-Lipschitz} function if there exists a constant $C$ such that :
    $$
        \forall  \text{x and y}  \in X, \Vert \phi(x) - \phi(y) \Vert_Y \leq C \Vert x - y \Vert_X.
    $$
\end{def*}

\begin{lem}\label{lemme_diff}
Let $\phi : X \rightarrow Y$ be a locally Lipschitz map, with $X$ compact, then there exists a constant $C$ such that
$$W_Y({\phi}_\sharp\mu,{\phi}_\sharp\nu)\leq  C W_X(\mu,\nu).$$
\end{lem}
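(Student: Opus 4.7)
The plan is to reduce the inequality to a standard change-of-variables identity for transport plans, after first upgrading the hypothesis ``locally Lipschitz'' to ``globally Lipschitz''. Once $\phi$ is known to be $C$-Lipschitz on all of $X$, the product map $\phi\times\phi:X\times X\to Y\times Y$ transports a candidate coupling of $(\mu,\nu)$ to a candidate coupling of $(\phi_\sharp\mu,\phi_\sharp\nu)$ whose transport cost is controlled by $C$ times the original one.

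First I would verify that a locally Lipschitz map on a compact metric space is globally Lipschitz. For each $x\in X$, pick an open ball $B_x$ on which $\phi$ is $C_x$-Lipschitz, extract a finite subcover $(B_{x_1},\dots,B_{x_n})$ of $X$, and let $\delta>0$ be a Lebesgue number for this cover. For pairs $x,y$ with $\|x-y\|_X<\delta$, both points lie in a common $B_{x_i}$, so $\|\phi(x)-\phi(y)\|_Y\leq (\max_i C_{x_i})\,\|x-y\|_X$. For pairs with $\|x-y\|_X\geq \delta$, the image $\phi(X)$ is compact (as $\phi$ is in particular continuous), hence has finite diameter $D$, and one writes $\|\phi(x)-\phi(y)\|_Y\leq D\leq (D/\delta)\,\|x-y\|_X$. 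Setting $C:=\max(\max_i C_{x_i},D/\delta)$ gives a global Lipschitz constant.

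Second, given any transference plan $\gamma\in\Pi(\mu,\nu)$, I would form the pushforward $(\phi\times\phi)_\sharp\gamma$ on $Y\times Y$. A direct check on product sets $A\times Y$ and $Y\times B$ shows its marginals are $\phi_\sharp\mu$ and $\phi_\sharp\nu$, so it belongs to $\Pi(\phi_\sharp\mu,\phi_\sharp\nu)$. The change-of-variables formula for pushforwards and the global Lipschitz bound then give
\begin{equation*}
\int_{Y\times Y}\|y-y'\|_Y\,d((\phi\times\phi)_\sharp\gamma) \;=\; \int_{X\times X}\|\phi(x)-\phi(x')\|_Y\,d\gamma \;\leq\; C\int_{X\times X}\|x-x'\|_X\,d\gamma.
\end{equation*}
The left-hand side is an upper bound for $W_Y(\phi_\sharp\mu,\phi_\sharp\nu)$ since $(\phi\times\phi)_\sharp\gamma$ is an admissible coupling; taking the infimum of the right-hand side over $\gamma\in\Pi(\mu,\nu)$ yields $W_Y(\phi_\sharp\mu,\phi_\sharp\nu)\leq C\,W_X(\mu,\nu)$.

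The only nontrivial step is the first one: extracting a single global Lipschitz constant from local Lipschitz behaviour via a Lebesgue-number argument. The transport-plan pushforward and the change-of-variables computation are routine once that is in hand.
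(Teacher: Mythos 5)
Your proposal is correct and follows essentially the same route as the paper: push forward a coupling under $\phi\times\phi$, apply the change-of-variables formula, and control the cost by a global Lipschitz constant obtained from compactness. The one place you are actually more careful than the paper is the local-to-global Lipschitz step: the paper's auxiliary lemma asserts that $C:=\max_i C_{x_i}$ works after extracting a finite subcover, which is not quite enough for pairs of points lying in no common patch, whereas your Lebesgue-number argument together with the diameter bound $D/\delta$ for far-apart pairs closes that gap cleanly.
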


\begin{proof}
Let $\gamma$ be a transference plan realising $W_X(\mu,\nu)$. Define $\gamma ':=(\phi\times \phi)_\sharp \gamma$. One can check that $\gamma '$ defines a transference plan between $\phi_\sharp\mu$ and $\phi_\sharp\nu$.  Therefore, one has the following relation
\begin{eqnarray*}
W_Y({\phi}_\sharp\mu,{\phi}_\sharp\nu) & \leq & \int \Vert  x-y \Vert d\gamma'(x,y)\\
& = &  \int \Vert \phi(x)-\phi(y) \Vert d\gamma(x,y)\\
& \leq & \int C \Vert x-y \Vert d\gamma(x,y)\\
& = & C W_X(\mu,\nu), 
\end{eqnarray*}
where the first inequality comes from the fact that  $\gamma'$ is a transference plan, the first equality from the definition of the push forward of a measure by a map (recalled in section \ref{WGANS}), the last inequality from lemma   \ref{lemme_compact}, and the last equality from the choice of $\gamma$.
\end{proof}

\begin{lem}\label{lemme_compact}
    Let $\phi : X \rightarrow Y$ be a locally Lipschitz map, and $X$ a compact metric space. Then there exists $C$ such that $\phi$ is a $C$-Lipschitz function.
\end{lem}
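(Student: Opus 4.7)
The plan is to combine compactness with local Lipschitzness by handling ``nearby'' and ``far apart'' pairs of points separately, patching the local estimates using the Lebesgue number lemma for the first case and the boundedness of $\phi(X)$ for the second.

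First, for every $x \in X$, local Lipschitzness gives an open neighborhood $U_x$ and a constant $C_x$ such that $\phi\vert_{U_x}$ is $C_x$-Lipschitz. The family $\{U_x\}_{x\in X}$ is an open cover of the compact space $X$, so I would extract a finite subcover $U_{x_1},\dots,U_{x_n}$ and set $C_0 := \max_i C_{x_i}$, which is finite.

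Next I would invoke the Lebesgue number lemma for compact metric spaces to produce $\delta > 0$ such that every subset of $X$ of diameter less than $\delta$ is contained in some $U_{x_i}$. Then, for any $x,y \in X$ with $\Vert x - y\Vert_X < \delta$, both points lie in a common $U_{x_i}$, so
\[
\Vert \phi(x) - \phi(y)\Vert_Y \leq C_{x_i} \Vert x - y\Vert_X \leq C_0 \Vert x - y\Vert_X.
\]
For the complementary case $\Vert x - y\Vert_X \geq \delta$, I would use that $\phi$ is continuous (being locally Lipschitz) and that $X$ is compact, so $\phi(X)$ is bounded, say of diameter at most $D < \infty$. Then
\[
\Vert \phi(x) - \phi(y)\Vert_Y \leq D \leq \frac{D}{\delta}\Vert x - y\Vert_X.
\]
Setting $C := \max\bigl(C_0,\, D/\delta\bigr)$ yields the desired global Lipschitz constant.

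The only slightly delicate point is the Lebesgue number step: one must be sure that local Lipschitzness is used only on sets of diameter less than $\delta$, which is exactly what the Lebesgue number lemma guarantees; without this, a pair $(x,y)$ close in $X$ but straddling two patches $U_{x_i}, U_{x_j}$ could not be controlled by a single local constant. Everything else is a direct consequence of compactness and continuity, so I do not anticipate a serious obstacle.
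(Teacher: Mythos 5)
Your proof is correct, and it is in fact more careful than the paper's own argument. The paper performs the same first step (extract a finite subcover $U_{x_1},\dots,U_{x_n}$ from the neighbourhoods furnished by local Lipschitzness) but then simply asserts that ``one can check'' that $C:=\max_i C_{x_i}$ is a global Lipschitz constant. That assertion is false as stated: take $X=[0,1]\cup[2,3]$ and $\phi$ equal to $0$ on the first piece and $1$ on the second; then $\phi$ is locally $0$-Lipschitz, so the paper's recipe gives $C=0$, which is not a global Lipschitz constant. The lemma itself is nonetheless true, and the two ingredients you add --- the Lebesgue number $\delta$ to guarantee that any pair with $\Vert x-y\Vert_X<\delta$ sits inside a single patch, and the bound $D/\delta$ coming from the boundedness of $\phi(X)$ for pairs with $\Vert x-y\Vert_X\geq\delta$ --- are exactly what is needed to close the gap, at the harmless cost of the larger constant $\max(C_0, D/\delta)$. (An equivalent fix is a compactness/contradiction argument with sequences $x_n,y_n$ violating every Lipschitz bound, but your version is just as short.) Since lemma \ref{lemme_diff} and hence theorem \ref{controle} only require the existence of \emph{some} finite $C$, your proof supports everything downstream.
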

\begin{proof}
    By definition of a locally Lipschitz map, for all $x$ in $X$, there exists $U_x$ a neighbourhood of $x$ and a constant $C_x$ such that $\phi$ is $C_x$-Lipschitz on $U_x$. \\
    So $\bigcup_{x \in X} U_x$ is a cover of $X$. Since $X$ is compact, there exists a finite set $I$ such that $\bigcup_{i \in I} U_i$ is a cover of $X$. \\
    One can check that $\phi$ is $C$-Lipschitz on $X$, with $C := \max_{i \in I}(C_{x_i})$.
\end{proof}

\subsection{Proof of theorem \ref{controle}}

We can finally turn to the proof of theorem \ref{controle} :

\begin{theo*} 

There exist two positive constants $a$ and $b$ such that
\begin{eqnarray*}
 W(\mathbb{P}_{\mathcal{D}'},\mathbb{P}_\theta')  \leq & a & W(\mathbb{P}_\mathcal{D}, \mathbb{P}_{\mathcal{D}'})+ W(\mathbb{P}_\mathcal{D},AE(\mathbb{P}_\mathcal{D})) \\
                                                    + & b & W({c_1}_{\sharp}\mathbb{P}_{\mathcal{D}'},\mathbb{P}'^0_\theta).
\end{eqnarray*}

\end{theo*}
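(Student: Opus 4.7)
The plan is to prove the inequality by combining the triangle inequality for the Wasserstein distance with the Lipschitz push-forward control provided by Lemma~\ref{lemme_diff}. The key observation is that the transferred generator has the form $g_1 \circ g_0'$, and the autoencoder has the form $g_1 \circ c_1$, so the map $g_1$ appears on both sides and can be pulled out using its Lipschitz property.

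First, I would insert the reconstructed target distribution $AE(\mathbb{P}_{\mathcal{D}'}) = (g_1 \circ c_1)_\sharp \mathbb{P}_{\mathcal{D}'}$ as an intermediate point, yielding by triangle inequality
\begin{equation*}
W(\mathbb{P}_{\mathcal{D}'}, \mathbb{P}_\theta') \;\leq\; W(\mathbb{P}_{\mathcal{D}'}, AE(\mathbb{P}_{\mathcal{D}'})) + W(AE(\mathbb{P}_{\mathcal{D}'}), (g_1 \circ g_0')_\sharp \mathbb{P}_Z).
\end{equation*}
For the second term, I would rewrite both measures as push-forwards by $g_1$, namely ${g_1}_\sharp({c_1}_\sharp \mathbb{P}_{\mathcal{D}'})$ and ${g_1}_\sharp({g_0'}_\sharp \mathbb{P}_Z)$, and then apply Lemma~\ref{lemme_diff} to the locally Lipschitz map $g_1$ on its compact domain. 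This produces a constant $b$ (the Lipschitz constant of $g_1$) times $er_{mind} = W({c_1}_\sharp \mathbb{P}_{\mathcal{D}'}, {g_0'}_\sharp \mathbb{P}_Z)$, which is exactly the mindGAN term.

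Next I would bound the first term $W(\mathbb{P}_{\mathcal{D}'}, AE(\mathbb{P}_{\mathcal{D}'}))$, which involves the target distribution, in terms of the autoencoder error on the source distribution. I would introduce $\mathbb{P}_{\mathcal{D}}$ and $AE(\mathbb{P}_{\mathcal{D}})$ as intermediate points:
\begin{equation*}
W(\mathbb{P}_{\mathcal{D}'}, AE(\mathbb{P}_{\mathcal{D}'})) \leq W(\mathbb{P}_{\mathcal{D}'}, \mathbb{P}_{\mathcal{D}}) + W(\mathbb{P}_{\mathcal{D}}, AE(\mathbb{P}_{\mathcal{D}})) + W(AE(\mathbb{P}_{\mathcal{D}}), AE(\mathbb{P}_{\mathcal{D}'})).
\end{equation*}
The middle summand is exactly $er_{AE}$ and appears with coefficient $1$, matching the statement. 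For the last summand, I would again invoke Lemma~\ref{lemme_diff}, this time applied to the composite map $AE = g_1 \circ c_1$ (locally Lipschitz on the compact space $\chi$), to bound it by a constant $C$ times $W(\mathbb{P}_{\mathcal{D}}, \mathbb{P}_{\mathcal{D}'})$. Setting $a := 1 + C$ then assembles all three contributions into the claimed inequality.

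The main obstacle I anticipate is not analytic but technical: making sure that Lemma~\ref{lemme_diff} genuinely applies, i.e.\ that we are entitled to assume the relevant domains are compact metric spaces and the maps $g_1$ and $c_1$ are locally Lipschitz. The paper already works under the standing assumption that $\chi$ is a compact metric space (this is used to state Kantorovich--Rubinstein duality), and neural networks with standard activations are locally Lipschitz, so these hypotheses should be transferable without issue. Once they are in hand, the proof is essentially a two-stage triangle inequality with Lipschitz push-forward applied twice — once to extract $g_1$ from the mindGAN term, and once to transport the autoencoder error from the target distribution to the source distribution.
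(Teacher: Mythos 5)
Your proposal follows exactly the paper's own argument: the same first triangle inequality through $AE(\mathbb{P}_{\mathcal{D}'})$, the same application of Lemma~\ref{lemme_diff} with $\phi=g_1$ for the mindGAN term, and the same double triangle inequality plus Lemma~\ref{lemme_diff} with $\phi=g_1\circ c_1$ for the domain-shift term (which the paper isolates as Lemma~\ref{lemma::AE}). The one technical caveat you flag is also the one the paper handles in a remark — applying Lemma~\ref{lemme_diff} to $g_1$ requires ${g_0'}_\sharp\mathbb{P}_Z$ to have compact support, which is arranged by choosing a compactly supported prior $\mathbb{P}_Z$ rather than a Gaussian.
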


\begin{proof}

From the triangle inequality property of the Wasserstein metric and the definition of $\mathbb{P}_\theta'$, one has :
\begin{equation*}
W(\mathbb{P}_{\mathcal{D}'},\mathbb{P}_\theta') \leq  W(\mathbb{P}_{\mathcal{D}'},AE(\mathbb{P}_{\mathcal{D}'}))+W(AE(\mathbb{P}_{\mathcal{D}'}),{g_1}_\sharp\mathbb{P}'^0_\theta).
\end{equation*}
 One concludes with lemma \ref{lemma::AE}  and lemma \ref{lemme_diff} with $\phi=g_1$.
\end{proof}

\begin{lem}\label{lemma::AE}
There exist a positive constant $a$ such that
$$W(\mathbb{P}_{\mathcal{D}'},AE(\mathbb{P}_{\mathcal{D}'}))  \leq a W(\mathbb{P}_\mathcal{D}, \mathbb{P}_{\mathcal{D}'})+ W(\mathbb{P}_\mathcal{D},AE(\mathbb{P}_\mathcal{D})).$$ 
\end{lem}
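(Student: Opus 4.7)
My plan is to apply the triangle inequality twice to insert the source distribution $\mathbb{P}_\mathcal{D}$ and its autoencoded version $AE(\mathbb{P}_\mathcal{D})$ as intermediate points between $\mathbb{P}_{\mathcal{D}'}$ and $AE(\mathbb{P}_{\mathcal{D}'})$, then control the residual term with the Lipschitz pushforward estimate of lemma \ref{lemme_diff}.

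Concretely, I would first write
\begin{equation*}
W(\mathbb{P}_{\mathcal{D}'},AE(\mathbb{P}_{\mathcal{D}'})) \leq W(\mathbb{P}_{\mathcal{D}'},\mathbb{P}_\mathcal{D}) + W(\mathbb{P}_\mathcal{D},AE(\mathbb{P}_\mathcal{D})) + W(AE(\mathbb{P}_\mathcal{D}),AE(\mathbb{P}_{\mathcal{D}'})).
\end{equation*}
The first two terms are already the quantities $er_{shift}$ and $er_{AE}$ appearing on the right-hand side, so the task reduces to bounding the last term. Since $AE = g_1 \circ c_1$ is a composition of deep neural networks whose layers use standard (locally Lipschitz) activations, $AE$ is itself a locally Lipschitz map $\chi \to \chi$. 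The hypothesis that $\chi$ is a compact metric space, already invoked throughout the paper for the Rubinstein--Kantorovich duality, then lets me apply lemma \ref{lemme_diff} with $\phi = AE$ to obtain a constant $C \geq 0$ with
\begin{equation*}
W(AE(\mathbb{P}_\mathcal{D}),AE(\mathbb{P}_{\mathcal{D}'})) \leq C\,W(\mathbb{P}_\mathcal{D},\mathbb{P}_{\mathcal{D}'}).
\end{equation*}

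Combining the two displays yields
\begin{equation*}
W(\mathbb{P}_{\mathcal{D}'},AE(\mathbb{P}_{\mathcal{D}'})) \leq (1+C)\,W(\mathbb{P}_\mathcal{D},\mathbb{P}_{\mathcal{D}'}) + W(\mathbb{P}_\mathcal{D},AE(\mathbb{P}_\mathcal{D})),
\end{equation*}
so setting $a := 1+C$ gives the claimed inequality. Note that this $a$ is precisely the constant that then propagates into the statement of theorem \ref{controle}; the other constant $b$ there arises analogously from applying lemma \ref{lemme_diff} to $g_1$ alone.

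The only real subtlety is justifying the local Lipschitz property on a compact ambient space. For the networks used in the paper (convolutions, dense layers, instance/batch normalisation at inference, and activations such as ReLU or $\tanh$), each layer is piecewise affine or smooth, hence locally Lipschitz; their composition inherits the property. The compactness of $\chi$ (e.g.\ $[-1,1]^{28\times 28}$ in the experiments) is what upgrades this to a global Lipschitz constant via lemma \ref{lemme_compact}. Beyond that, the proof is a two-line triangle-inequality argument.
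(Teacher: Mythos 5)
Your proof is correct and is essentially identical to the paper's: the same double triangle inequality through $\mathbb{P}_\mathcal{D}$ and $AE(\mathbb{P}_\mathcal{D})$, followed by lemma \ref{lemme_diff} applied to $\phi = g_1\circ c_1$ to absorb the term $W(AE(\mathbb{P}_\mathcal{D}),AE(\mathbb{P}_{\mathcal{D}'}))$ into the constant $a=1+C$. Your added discussion of why neural networks are locally Lipschitz matches the remark the paper makes after the theorem's proof.
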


\begin{proof}
Applying twice the triangle inequality, one has :
\begin{eqnarray*}
W(\mathbb{P}_{\mathcal{D}'},AE(\mathbb{P}_{\mathcal{D}'})) & \leq &W(\mathbb{P}_{\mathcal{D}'},\mathbb{P}_{\mathcal{D}}) +W(\mathbb{P}_{\mathcal{D}},AE(\mathbb{P}_{\mathcal{D}}))\\
                                                                                                   & + & W(AE(\mathbb{P}_{\mathcal{D}}),AE(\mathbb{P}_{\mathcal{D}'})).
\end{eqnarray*}
One concludes with lemma \ref{lemme_diff} with $\phi=g_1\circ c_1$.
\end{proof}

\begin{rem}
It is important to remark that in order to be able to apply lemma \ref{lemme_diff} in the proof of theorem \ref{controle},  one needs the assumption that $\mathbb{P}'^0_\theta$ and ${c_1}_\sharp\mathbb{P}_{\mathcal{D}'}$ have compact support. But as $\chi$ is itself compact, this is not a problem for ${c_1}_\sharp\mathbb{P}_{\mathcal{D}'}$ since the image of a compact $\chi$ by a continuous function $c_1$ is compact. However,  the compacity of the support of $\mathbb{P}'^0_\theta$ is not a priori granted. An easy fix is to choose a prior $\mathbb{P}_Z$ with compact support. Therefore, we choose this setting in our applications. 
\end{rem}

\begin{rem}
Our proof of theorem \ref{controle} implicitly assumed that neural networks are locally Lipschitz maps (see lemmata  \ref{lemme_diff} and \ref{lemma::AE}). This assumption is justified by the following lemma.
\end{rem}

\begin{lem*}
    Let $g : Z \rightarrow X$ be a neural network and $\mathbb{P}_{Z}$ a prior over $Z$ such that $\mathbb{E}_{z \sim \mathbb{P}_{Z}}(\Vert z \Vert )< \infty $ (such as Gaussian) then $g$ is locally Lipschitz and $\mathbb{E}_{z \sim \mathbb{P}_{Z}}(L_z)< \infty $, where $L_z$ are the local Lipschitz constants. 
\end{lem*}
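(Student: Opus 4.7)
The plan is to peel the network open layer by layer and treat the regularity claim and the integrability claim separately. Write $g = \sigma_L \circ A_L \circ \cdots \circ \sigma_1 \circ A_1$, where each $A_i$ is an affine map (weights $W_i$ plus bias) and each $\sigma_i$ is a coordinate-wise activation. Every $A_i$ is globally Lipschitz with constant $\|W_i\|_{op}$, and every standard activation---ReLU, leaky ReLU, sigmoid, tanh, softplus, GELU---is at least locally Lipschitz on $\mathbb{R}$; in fact most of them are globally $1$-Lipschitz, and the smooth but unbounded variants (e.g.\ polynomial-type activations) are $\mathcal{C}^\infty$ and hence locally Lipschitz. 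Since the composition of locally Lipschitz maps between normed spaces is again locally Lipschitz, the first half of the conclusion is immediate.

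For a quantitative bound on $L_z$ I would pass through the Jacobian. Setting $h_0 = z$ and $h_i = \sigma_i(A_i h_{i-1})$, the chain rule gives
\begin{equation*}
\|Dg(z)\|_{op} \;\leq\; \prod_{i=1}^{L} \|W_i\|_{op} \;\cdot\; \prod_{i=1}^{L} \|\sigma_i'(A_i h_{i-1})\|_\infty,
\end{equation*}
and $L_z$ is controlled (up to a constant independent of $z$) by the supremum of this quantity over a small ball around $z$. When the activation derivatives are globally bounded---the typical situation in practice, including all architectures considered in this paper---this upper bound is an absolute constant $L$, so $\mathbb{E}_{z\sim\mathbb{P}_Z}(L_z) \leq L < \infty$ without any use of the moment hypothesis on $\mathbb{P}_Z$. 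When the derivatives only grow polynomially, iterating the layer recursion shows that $\|h_i(z)\|$ is polynomially controlled in $\|z\|$, giving a bound $L_z \leq P(\|z\|)$ for some polynomial $P$; finiteness of $\mathbb{E}(L_z)$ then reduces to finiteness of the corresponding polynomial moments of $\mathbb{P}_Z$.

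The main subtlety---and the place I expect the statement to need the most care---is that the displayed hypothesis $\mathbb{E}\|z\| < \infty$ is really only adequate either in the bounded-derivative regime or for activations with strictly linear growth; for a fully general network with unbounded derivatives one would genuinely need all polynomial moments of $\|z\|$ to be finite, which is exactly why the parenthetical remark \emph{``such as Gaussian''} is strictly stronger than the stated moment condition. Since the architectures used throughout the paper rely only on bounded-derivative activations (ReLU, tanh, and the like), the uniform bound applies and both conclusions of the lemma hold.
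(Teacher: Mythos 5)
Your argument is correct, but it is genuinely different from what the paper does: the paper offers no proof at all and simply cites Corollary~1 of Arjovsky, Chintala and Bottou (2017). Your layer-by-layer decomposition, the composition-of-locally-Lipschitz-maps step, and the Jacobian product bound are all sound, and your diagnosis of the moment hypothesis is the most valuable part of the write-up. Indeed, for the statement as transcribed here --- Lipschitzness of $g$ in $z$ alone --- the hypothesis $\mathbb{E}_{z\sim\mathbb{P}_Z}\Vert z\Vert<\infty$ does no work when the activations have bounded derivatives, exactly as you observe: $L_z$ is then bounded by the absolute constant $\prod_i\Vert W_i\Vert_{op}\prod_i\Vert\sigma_i'\Vert_\infty$. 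The hypothesis is not superfluous in the \emph{cited} result, however, because Corollary~1 of Arjovsky et al.\ concerns local Lipschitz constants jointly in $(\theta,z)$: the gradient with respect to the weights of layer $i$ is proportional to the activations $h_{i-1}(z)$ of the previous layer, which grow linearly in $\Vert z\Vert$, so there one gets $L_{(\theta,z)}\leq C_1+C_2\Vert z\Vert$ and the first-moment condition is exactly what makes $\mathbb{E}[L_{(\theta,z)}]$ finite. The paper's restatement silently drops the $\theta$-dependence but keeps the hypothesis, which is why it looks idle to you. Your self-contained argument buys transparency and exposes this mismatch; the paper's citation buys brevity and implicitly covers the joint $(\theta,z)$ case that the downstream WGAN differentiability results actually need. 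Your caveat about activations with unbounded derivatives requiring higher moments of $\mathbb{P}_Z$ is also correct, though outside the scope of the architectures used in the paper.
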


\begin{proof}
 See Corollary 1. of \cite{Arjovsky:2017aa}
\end{proof}

\subsection{Application to Wasserstein autoencoders.} The main theorem of \cite{sinkhorn}, theorem 3.1, guarantees the convergence of a Wasserstein autoencoder (WAE). We recall this theorem and show that it is a direct consequence of our theorem \ref{controle}.

\begin{theo} \label{theo_sink}
\begin{eqnarray}
W(\mathbb{P}_{\mathcal{D}},{g_1}_{\sharp} \mathbb{P}_\mathcal{Z}) \leq & W(\mathbb{P}_\mathcal{D}, AE(\mathbb{P}_\mathcal{D})) \\
                                                            + & bW({c_1}_{\sharp}\mathbb{P}_{\mathcal{D}},\mathbb{P}_\mathcal{Z}).
\end{eqnarray}
\end{theo}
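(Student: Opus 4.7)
The plan is to derive Theorem \ref{theo_sink} as an immediate specialisation of Theorem \ref{controle}, which already bundles all the necessary ingredients. The key observation is that by identifying the source and target distributions, and by taking the MindGAN output distribution to be the prior itself, two of the three error terms in Theorem \ref{controle} collapse into precisely the right shape.

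Concretely, I would apply Theorem \ref{controle} with $\mathcal{D}' := \mathcal{D}$ and with $\mathbb{P}'^0_\theta := \mathbb{P}_\mathcal{Z}$. The first choice makes the domain-shift term $a \cdot W(\mathbb{P}_\mathcal{D}, \mathbb{P}_{\mathcal{D}'})$ vanish, since the Wasserstein distance between a distribution and itself is zero. The second choice, combined with the convention $\mathbb{P}_\theta' = {g_1}_\sharp \mathbb{P}'^0_\theta$ used throughout the paper, gives $\mathbb{P}_\theta' = {g_1}_\sharp \mathbb{P}_\mathcal{Z}$. Plugging these two identifications into the bound of Theorem \ref{controle} yields
$$W(\mathbb{P}_\mathcal{D}, {g_1}_\sharp \mathbb{P}_\mathcal{Z}) \leq W(\mathbb{P}_\mathcal{D}, AE(\mathbb{P}_\mathcal{D})) + b \cdot W({c_1}_\sharp \mathbb{P}_\mathcal{D}, \mathbb{P}_\mathcal{Z}),$$
which is exactly the statement of Theorem \ref{theo_sink}.

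The only point that needs a sanity check is that the compact-support hypotheses used inside the proof of Theorem \ref{controle} (to invoke Lemma \ref{lemme_diff}) remain satisfied after this specialisation. For ${c_1}_\sharp \mathbb{P}_\mathcal{D}$ this is immediate, since $\chi$ is compact and $c_1$ is continuous, so its image is compact. For $\mathbb{P}_\mathcal{Z}$ it suffices to restrict attention to a prior with compact support, which is precisely the standing convention adopted in the remark following Theorem \ref{controle}. Once this is noted, no further work is required and I do not anticipate any substantive difficulty: the entire argument reduces to re-reading Theorem \ref{controle} under the two substitutions above, with the constant $a$ becoming inert because its term drops out.
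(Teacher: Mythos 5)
Your proposal is correct and matches the paper's own argument: the paper likewise specialises Theorem \ref{controle} by setting $\mathcal{D}'=\mathcal{D}$ (so the domain-shift term vanishes) and replacing $\mathbb{P}'^0_\theta$ by $\mathbb{P}_\mathcal{Z}$, so that $\mathbb{P}_\theta'$ becomes ${g_1}_{\sharp}\mathbb{P}_\mathcal{Z}$. Your added remark on compact supports is a harmless extra precaution consistent with the paper's standing conventions.
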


\begin{proof}
Since WAEs do not involve transfer, one has $\mathcal{D}=\mathcal{D}'$, i.e. $\mathbb{P}_\mathcal{D}=\mathbb{P}_{\mathcal{D}'}$.
Then it suffices to replace $\mathbb{P}'^0_\theta$ by $\mathbb{P}_\mathcal{Z}$ and $\mathbb{P}_\theta'$ becomes ${g_1}_{\sharp} \mathbb{P}_\mathcal{Z}$.
\end{proof}

\begin{rem}
Our proof of theorem \ref{controle} is very similar to the proof of theorem \ref{theo_sink} given in \cite{sinkhorn}. Therefore, our contribution here consists rather in finding a versatile statement that applies to both problems (transfer and WAE) than in the originality of the tools used in the proofs.
\end{rem}

\begin{rem}
 When one restricts our approach to the case when $\mathcal{D}=\mathcal{D}'$, it does not coincide with WAE. Indeed, with the notations of our paper, WAE work with a fixed prior $\mathbb{P}_M$ on $M$ that one tries to approximate by ${c_1}_\sharp \mathbb{P}_\mathcal{D}$, while constraining $c_1$ to be a right inverse (in measure) of $g_1$, and ${g_1}_\sharp \mathbb{P}_M$ to approximate (in measure) $\mathbb{P}_\mathcal{D}$. On the other hand, our approach involves an extra auxiliary latent space $Z$. Therefore we can consider ${g_0}_\sharp \mathbb{P}_Z$ as a replacement of  $\mathbb{P}_M$. Via the flexibility of the learnable weights of $g_0$, we use ${g_0}_\sharp\mathbb{P}_Z$ to approximate ${c_1}_\sharp \mathbb{P}_\mathcal{D}$, instead of using ${c_1}_\sharp \mathbb{P}_\mathcal{D}$ to approximate $\mathbb{P}_M$ as in \cite{AAE}. This is fundamental, because in a setting where $\mathcal{D}\neq \mathcal{D}'$, this decoupling permits to train $c_1$ and $g_1$ on $\mathcal{D}$ and $c_0$ and $g_0$ on $c_1(\mathcal{D}')$, enabling us to do transfer.
\end{rem}

\section{Mind2Mind conditional GANs} \label{cond}

As suggested to us by L. Cetinsoy, the Mind2Mind approach also applies to conditional GANs. However, one needs to implement the following modifications : replace $M$ by $M\times L$ and $Z$ by $Z\times L$ in the diagram

 \begin{equation}
\begin{tikzcd}[baseline=(current  bounding  box.center)]
 &M \arrow{dr}{g_1} && M \arrow{dr}{c_0}\\
Z\arrow{ur}{g_0} \arrow{rr}{g} && \chi \arrow{ur}{c_1} \arrow{rr}{c} && \mathbb{R},
\end{tikzcd}
\end{equation}
where L stands for the space of conditions, in order to get

\begin{minipage}{.45\textwidth}

 \begin{equation}
\begin{tikzcd}[column sep=1.5em]
 &M\times L \arrow{dr}{g_1\times \mathbb{I}_L} && M\times L \arrow{dr}{c^c_0}\\
Z\times L\arrow{ur}{g^c_0} \arrow{rr}{g^c} && \chi \times L\arrow{ur}{c_1\times \mathbb{I}_L} \arrow{rr}{c^c} && \mathbb{R}.
\end{tikzcd}
\end{equation}
\end{minipage}

Here, $(g^c_0,c^c_0)$ and $(g^c,c^c)$ are conditional GANs, with the generators of the form $g^c_0(z,l)=(m(z,l),l)$ and $g^c(z,l)=(x(z,l),l)$. The autoencoder $(c_1\times \mathbb{I}_L,g_1\times \mathbb{I}_L)$ can be trivially deduced from an autoencoder $(c_1,g_1)$ via the formulas $c_1\times \mathbb{I}_L(x,l):=(c_1(x),l)$ and  $g_1\times \mathbb{I}_L(m,l):=(c_1(m),l)$.\\
In practice, the algorithm \ref{algo::transfer} becomes a classical conditional GAN algorithm :\\

 \begin{algorithm}[H]
	\small
	 \caption{Conditional-MindGAN transfer learning.}\label{algo::transfer condition}
              \begin{algorithmic}
    \Require $(c_1,g_1)$, an autoencoder trained on a source dataset $\mathcal{D}$, $\alpha$, the learning rate, $b$, the
      batch size, $n$, the number of iterations of the critic
      per generator iteration, $\mathcal{D}'\subset \chi\times L$, a dataset with conditions, $\varphi'$ and $\theta'$ the initial parameters of the critic $c^c_0$ and of the generator $g^c_0$.
    \State Compute $(c_1\times \mathbb{I}_L)(\mathcal{D}')$.
    \While{$\theta'$ has not converged}
      \For{$t = 0, ..., n_{\text{critic}}$}
        \State Sample $\{(m^{(i)},l^{(i)})\}_{i=1}^b \sim {(c_1 \times \mathbb{I}_L)}_\sharp \mathbb{P}_{\mathcal{D}'}$ a batch from $(c_1 \times \mathbb{I}_L)(\mathcal{D}')$.
        \State Sample $\{(z^{(i)},l^{(i)})\}_{i=1}^b \sim \mathbb{P}_{Z \times L}$ a batch of prior samples with conditions.
        \State Update $c^c_0$ by descending $L_c$.
      \EndFor
      \State Sample$\{(z^{(i)},l^{(i)})\}_{i=1}^b \sim \mathbb{P}_{Z \times L} $ a batch of prior samples with conditions.
       \State Update $g^c_0$ by descending  $-L_g$.
    \EndWhile
    \State \Return $g_1\circ g^c_0$.
\end{algorithmic}
        \end{algorithm}

\section{Supplementary experiments}\label{Appendice: exp}

 In figure \ref{figure::more_tuches} we display additional samples from a MindGAN on CelebaHQ transferred from FFHQ.  We also display in figure  \ref{figure::stats} the mean and standard deviation over 10 runs of the training of a MindGAN in $28\times 28$.   The source dataset in figure \ref{figure::mindgan f mnist} is $\mathcal{D}'=$ FashionMNIST, while in figure \ref{figure::mindgan mnist}, $\mathcal{D}'=$ MNIST. For comparison, we display in figure \ref{figure::vanilla mnist} samples from a vanilla WGAN.

\begin{figure}[t]
    \centering
    \includegraphics[width=.40\textwidth]{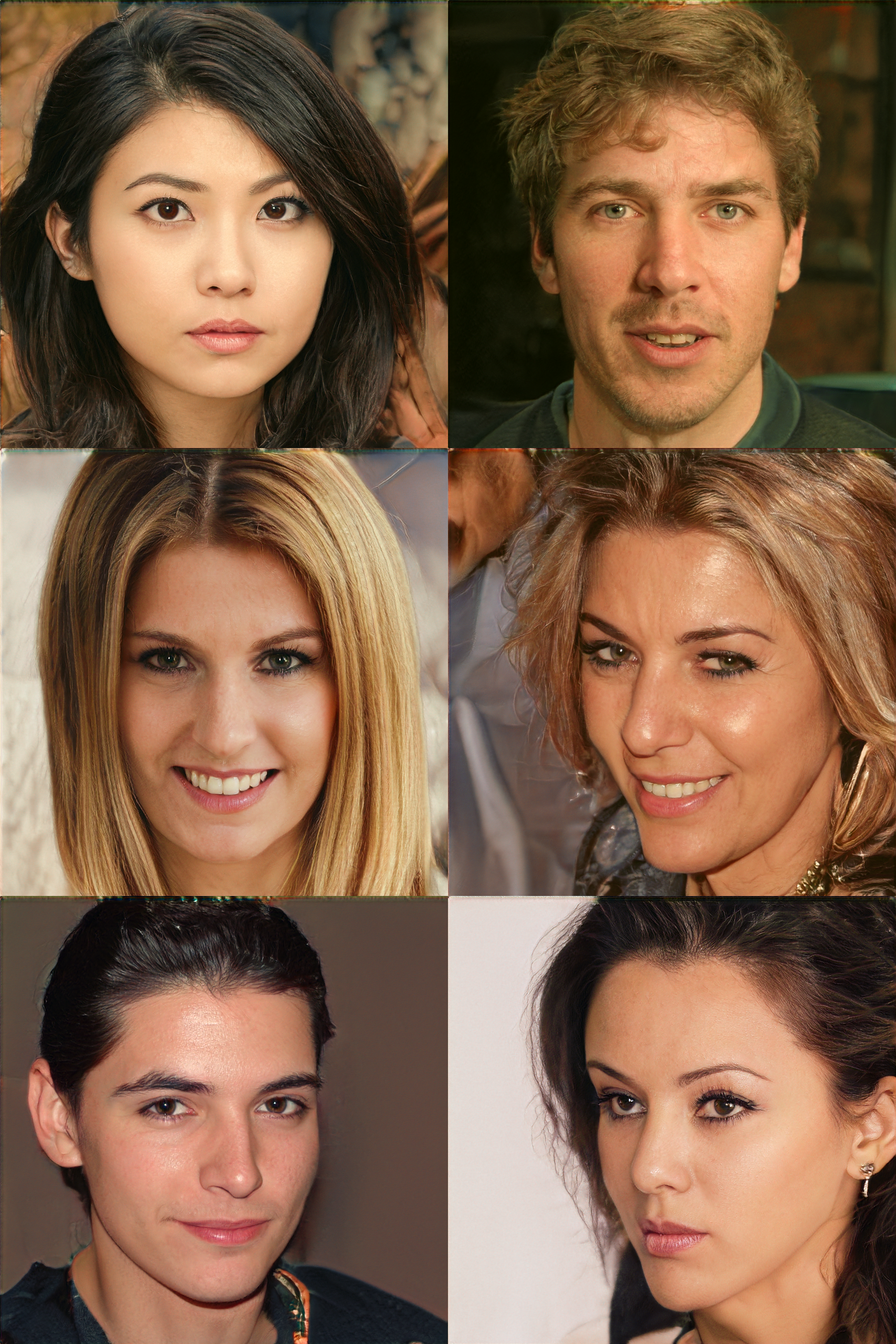} 
\medskip
    \caption{
        Mind2Mind on CelebaHQ transfered from FFHQ .
            }
    \label{figure::more_tuches}
\end{figure}

\begin{figure}[t]
    \centering
    \includegraphics[width=.45\textwidth]{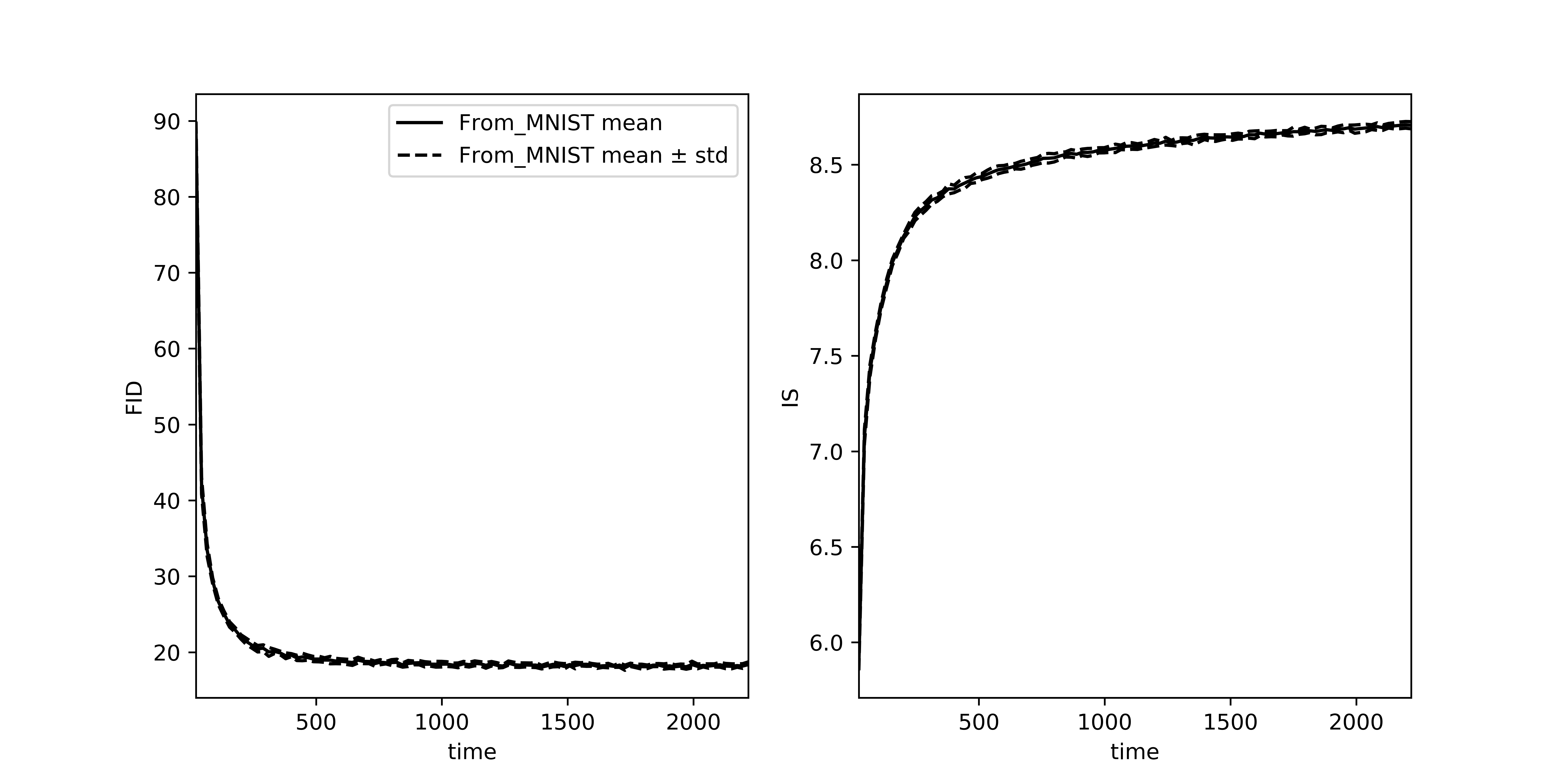} 
\medskip
    \caption{
        Mean and standard deviation of the training of a MindGAN.
            }
    \label{figure::stats}
\end{figure}

 \begin{figure}[t]
    \centering
    \includegraphics[width=.45\textwidth]{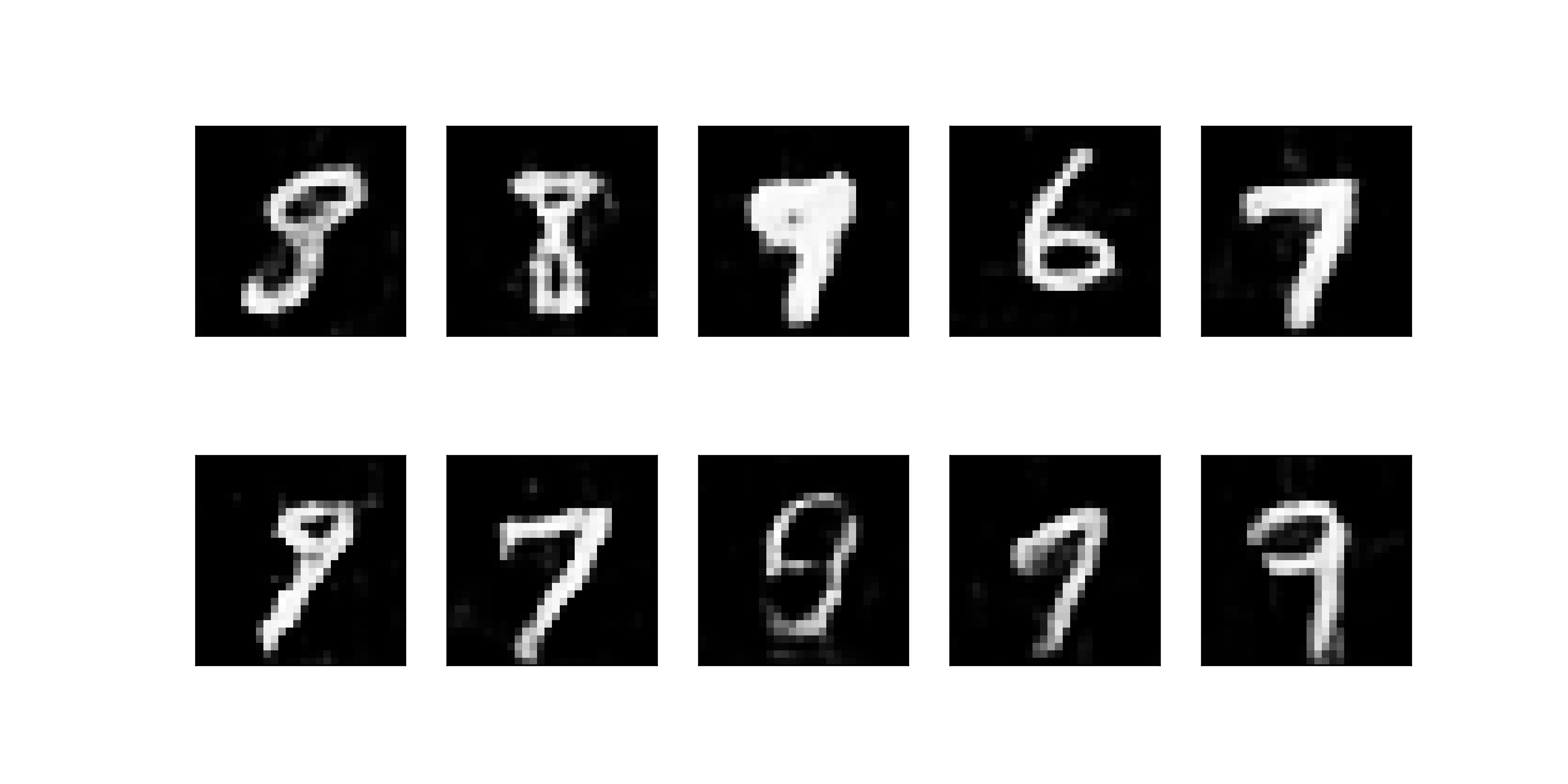}
\medskip
    \caption{
        MindGAN from FashionMNIST.
            }
    \label{figure::mindgan f mnist}
\end{figure}

 \begin{figure}[t]
    \centering
    \includegraphics[width=.45\textwidth]{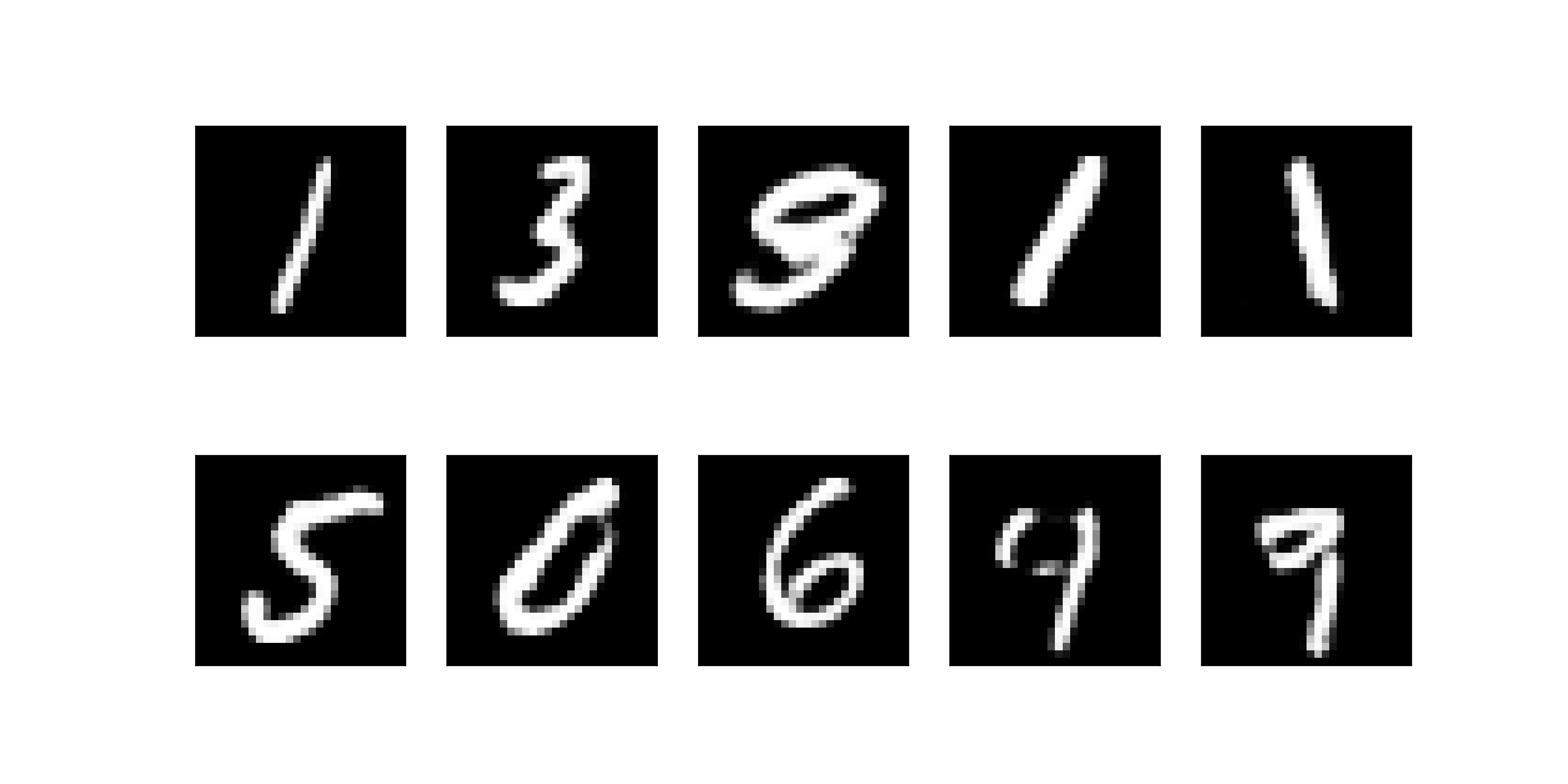}
\medskip
    \caption{
        MindGAN from MNIST.
            }
    \label{figure::mindgan mnist}
\end{figure}

 \begin{figure}[t]
    \centering
    \includegraphics[width=.45\textwidth]{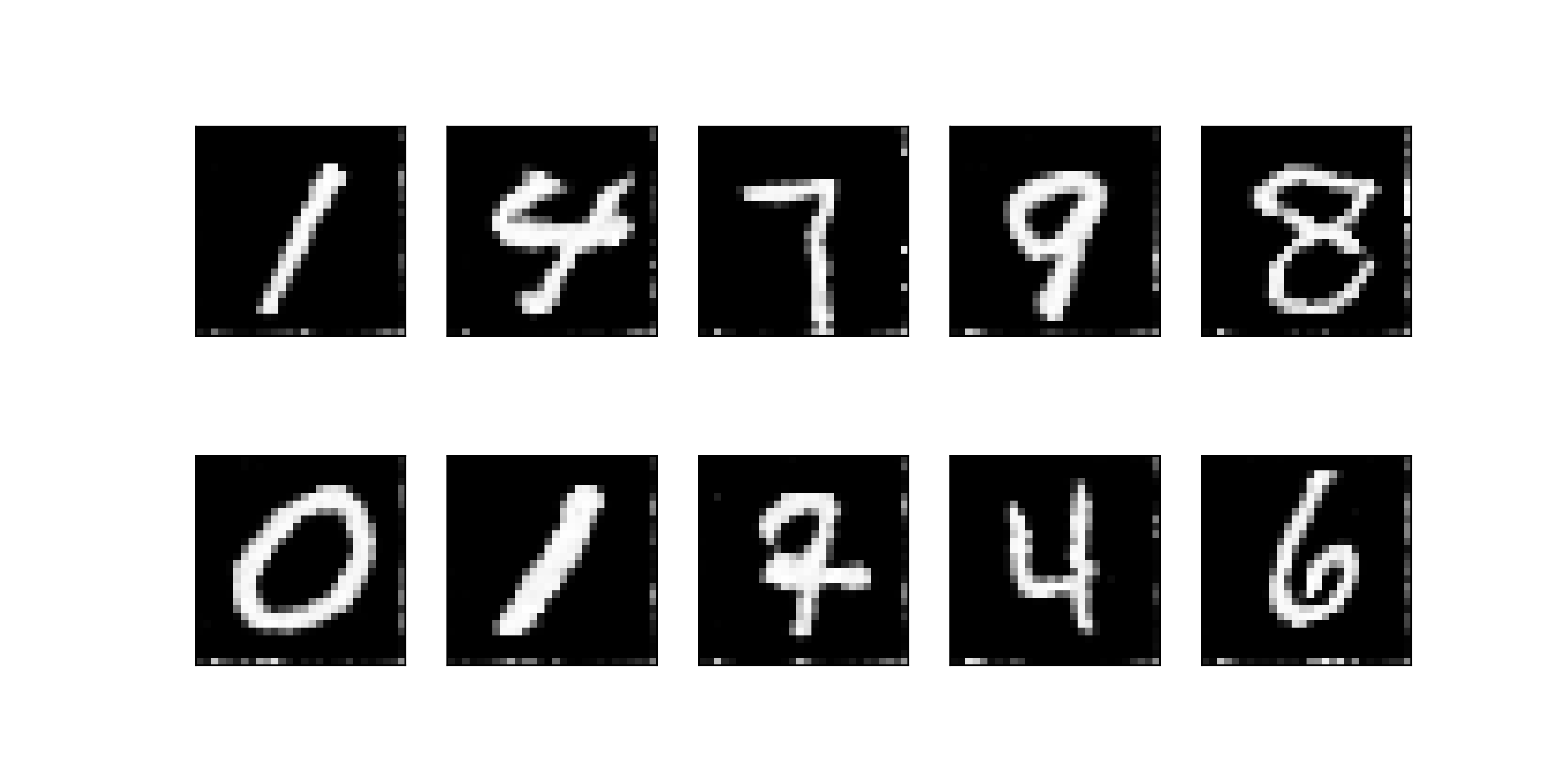}
\medskip
    \caption{
        Vanilla WGAN.
            }
    \label{figure::vanilla mnist}
\end{figure}
\newpage

\end{document}